\title{Faster online calibration without randomization:\\ interval forecasts and the power of two choices}
\author{Chirag Gupta, Aaditya Ramdas
\\\\\texttt{chiragg@cmu.edu, aramdas@cmu.edu}
\\\\Carnegie Mellon University
}
\date{\today}
\newtheorem{theorem}{Theorem}
\newtheorem{proposition}{Proposition}
\newtheorem{lemma}{Lemma}
\theoremstyle{definition}
\newtheorem{definition}{Definition}
\newtheorem{remark}{Remark}
\newcommand{\Real}{\mathbb{R}}
\newcommand{\Xcal}{\mathcal{X}}
\newcommand{\Ycal}{\mathcal{Y}}
\newcommand{\Ccal}{\mathcal{C}}
\newcommand{\Exp}[2]{\mathbb{E}_{#1}\left\lbrack#2\right\rbrack}
\newcommand{\infnorm}[1]{\|#1\|_\infty}
\newcommand{\onenorm}[1]{\|#1\|_1}
\newcommand{\twonorm}[1]{\|#1\|_2}
\newcommand{\indicator}[1]{\mathbf{1}\{#1\}}
\newcommand{\abs}[1]{\left\lvert#1\right\rvert}
\newcommand{\roundbrack}[1]{\left( #1 \right)}
\newcommand{\ceil}[1]{\left \lceil{#1}\right \rceil}
\newcommand{\inner}[1]{\left\langle #1 \right\rangle}
\newcommand{\valp}{\text{Val$^\text{p}$}}
\newcommand{\vals}{\text{Val$^\star$}}
\newcommand{\val}{\text{Val}}
\begin{document}

\maketitle

\begin{abstract}%
  We study the problem of making calibrated probabilistic forecasts for a binary sequence generated by an adversarial nature. Following the seminal paper of Foster and Vohra (1998), nature is often modeled as an adaptive adversary  who sees all activity of the forecaster except the randomization that the forecaster may deploy. A number of papers have proposed randomized forecasting strategies that achieve an $\epsilon$-calibration error rate of $O(1/\sqrt{T})$, which we prove is tight in general. On the other hand, it is well known that it is not possible to be calibrated without randomization, or if nature also sees the forecaster's randomization; in both cases the calibration error could be $\Omega(1)$. Inspired by the equally seminal works on the power of two choices and imprecise probability theory, we study a small variant of the standard online calibration problem. The adversary gives the forecaster the option of making \textit{two nearby probabilistic forecasts}, or equivalently an interval forecast of small width, and the endpoint closest to the revealed outcome is used to judge calibration. This power of two choices, or imprecise forecast, accords the forecaster with significant power---we show that a faster $\epsilon$-calibration rate of $O(1/T)$ can be achieved even without deploying any randomization. 
\end{abstract}

\section{Introduction}
A number of machine learning and statistics applications rely on probabilistic predictions. In economics, the influential discrete choice framework uses probabilistic modeling at its core \citep{mcfadden1973conditional}. \citet{spiegelhalter1986probabilistic} argued that when predictive models are used in medicine for detecting disease, categorizing patient risk, and clinical trials, it is imperative that they provide accurate probabilities, in order to appropriately guide downstream decisions. Weather forecasters (and their audiences) would like to know the probability of precipitation on a given day \citep{brier1950verification}.

We study the problem of producing probabilistic forecasts for binary events, that are calibrated without any assumptions on the data-generating process. Informally, a forecaster is calibrated if, on all the days that the forecaster produces a forecast $p_t$ that is approximately equal to $p\in[0,1]$, the empirical average of the observations $y_t \in\{0,1\}$ is also approximately equal to $p$, and this is true for every $p \in [0,1]$ that is frequently close to a forecast \citep{dawid1982well}. We formalize this next.  %

\subsection{Calibration games and $\epsilon$-calibration}

\begin{minipage}[t]{0.45\linewidth}
\fbox{%
    \parbox{0.95\textwidth}{
        \begin{center} \textbf{Calibration-Game-I (classical)}\\(nature is an adaptive  adversary)
         \end{center} 
        At time $t = 1, 2, \ldots, $
        \begin{itemize}[itemsep=0.23cm]%
            \item Forecaster plays $u_t \in \Delta([0,1])$.
            \item Nature plays $v_t \in \Delta(\{0,1\})$.
            \item Forecaster predicts $p_t \sim u_t$.
            \item Nature reveals $y_t \sim v_t$.
        \end{itemize}
    }%
}
\end{minipage}
\begin{minipage}[t]{0.54\linewidth}
\fbox{%
    \parbox{0.95\textwidth}{%
        \begin{center} \textbf{Calibration-Game-II (POTC)}\\(nature is an adaptive  adversary, \\ forecaster has two nearby choices)
         \end{center} 
        Fix $ \epsilon > 0$. At time $t = 1, 2, \ldots, $
        \begin{itemize}%
            \item Forecaster plays  $p_{t0}, p_{t1} \in [0,1]$, such that $p_{t0} \leq p_{t1}$ and $\abs{p_{t1} - p_{t0}} \leq  2\epsilon$.
            \item Nature reveals $y_t \in \{0,1\}$.
            \item If $y_t = 1$, set $p_t = p_{t1}$;
            else set $p_t = p_{t0}$.%
        \end{itemize}
    }%
}
\end{minipage}
\vspace{0.1cm}

Calibration-Game-I models the problem as a game between a forecaster and nature. The forecaster produces a \emph{randomized} forecast $u_t \in \Delta([0,1])$, which is a distribution over the space of forecasts $[0,1]$. $\Delta(S)$ denotes the set of probability distributions over the set $S$ (in every case, $S$ is a standard set like $[0,1]$ with a canonical $\sigma$-algebra). Nature observes $u_t$ and responds with a Bernoulli distribution for the outcome $v_t \in \Delta(\{0, 1\}) = [0,1]$. We abuse notation slightly and use $v_t$ to denote both the Bernoulli distribution and its parameter in $[0,1]$. Then forecaster and nature draw their actual actions, the forecast $p_t\sim u_t$ and the outcome $y_t \sim v_t$, simultaneously. At time $T > 1$, the prior activities $(u_t, v_t, p_t, y_t)_{t = 1}^{T-1}$ are known to both players. The goal of the forecaster is to appear calibrated, defined shortly. Nature wishes to prevent the forecaster from appearing calibrated. Such a nature is typically referred to as an adaptive adversary. %

Even before defining calibration formally, we can see that randomization is essential for the forecaster to demonstrate any semblance of being calibrated. If the forecaster is forced to put all his mass on a single $p_t$ at each time (or equivalently if nature is an adaptive \textit{offline} adversary), nature can play $v_t = y_t = \indicator{p_t \leq 0.5}$ to render the forecaster highly miscalibrated \citep{oakes1985self, dawid1985comment}.

In anticipation of a forthcoming definition of $\epsilon$-calibration error (equation~\eqref{eq:actual-calibration-error}), we note that it will suffice for forecasters to only make discrete forecasts. Let $\epsilon > 0$ be a discretization or tolerance level, which is a small constant such as $0.1$ or $0.01$ depending on the application. For technical simplicity, we assume that $\epsilon = 1/2m$ for some integer $m \geq 2$. Consider the $\epsilon$-cover of $[0,1]$ given by the $m$ intervals $I_1 = [0, 1/m), I_2 = [1/m, 2/m), \ldots, I_m = [1-1/m,1]$. At time $t$, the forecaster makes a forecast on the `$2\epsilon$-grid' of the mid-points of these intervals: \[p_t \in \{M_1 := 1/2m = \epsilon, M_2 := 3/2m = 3\epsilon, \ldots, M_m := 1-1/2m = 1-\epsilon\}.\]
Denote the total number of times the forecast is $p_t = M_i$ until time $T \geq 1$ as  %
\[
N_i^T := \abs{\{t \leq T : p_t = M_i\}},
\]
and the observed average of the $y_t$'s when $p_t = M_i$ as
\[\ 
p^T_i := \begin{cases} \frac1{N^T_i}{\sum_{t \leq T: p_t = M_i} y_t}
~ ~ ~\hfill  \text{ if } N^T_i > 0, \\
 M_i \hfill \text{otherwise}.
\end{cases}
\]
Following \citet{foster1999proof}, the ($\ell_1$-)calibration error at time $T$, $\text{CE}_T$, is defined as the weighted sum of the prediction errors for each possible forecast:
\begin{equation*}
     \text{CE}_T := \sum_{i = 1}^m \frac{N_i^T}{T}\cdot \abs{M_i - p_i^T} \text{, or equivalently }\sum_{i = 1}^m \abs{\frac{1}{T}\sum_{t = 1}^T \indicator{p_t = M_i} (M_i - y_t)}. 
\end{equation*}
Finally, we define the $\epsilon$-calibration error $(\epsilon\text{-CE}_T)$ as the calibration error with a slack of $\epsilon$:
\begin{equation}
     \epsilon\text{-CE}_T := \max(\text{CE}_T - \epsilon, 0).  \label{eq:actual-calibration-error}
\end{equation}
In Calibration-Game-I, the forecaster and nature are allowed to randomize, thus $\epsilon\text{-CE}_T$ is a random quantity. A commonly studied object is its expected value. A forecaster is said to be $\epsilon$-calibrated if, for \emph{any} strategy of nature, the forecaster satisfies
\begin{equation}
    \lim_{T \to \infty} \Exp{}{\epsilon\text{-CE}_T} = 0, \text{ or equivalently } \Exp{}{\epsilon\text{-CE}_T} = \underbrace{o_T(1)}_{f(T)}. 
    \label{eq:eps-calibration-expectation}
\end{equation}
We are interested in the worst case value of $\Exp{}{\epsilon\text{-CE}_T}$ against an adversarial nature, denoted as $f(T)$, henceforth called the $\epsilon\text{-calibration rate}$ or simply calibration rate. %
We show results about the asymptotic dependence of $f(T)$ as $T \to \infty$, holding $\epsilon$ as a fixed problem parameter on which $f$ may depend arbitrarily. 

\subsection{Related work and our contributions}
\label{sec:main-lit-review}
A number of papers have proposed $\epsilon$-calibrated forecasting algorithms which guarantee $f(T) = O(1/\sqrt{T})$---the first was the seminal work of \citet{foster1998asymptotic}, followed by a number of alternative proofs and generalizations of their result (\citet{foster1999proof}, \citet{fudenberg1999easier}, \citet{vovk2005defensive}, \citet[Section 4.1]{mannor2010geometric}, \citet[Theorem 22]{abernethy2011blackwell}, \citet[Section 4.2]{perchet2015exponential}).

In Theorem~\ref{thm:slow-calibration}, we show that the $O(1/\sqrt{T})$ rate achieved by these algorithms is tight. There is a strategy for nature that ensures $f(T) = \Omega(1/\sqrt{T})$. Our proof uses a non-constructive lower bound for Blackwell approachability games \citep{mannor2013approachability}.  %

\citet{qiao2021stronger} recently showed that the worst-case calibration error without the $\epsilon$-slack, $\Exp{}{\text{CE}_T}$, is  $\Omega(T^{-0.472})$. %
In contrast, we treat $\epsilon$ as a small constant fixed ahead of time, and consider lower bounds on $\Exp{}{\epsilon\text{-CE}_T}$. Neither goal subsumes the other, so our lower bound complements theirs.  In particular, observe that $\Exp{}{\text{CE}_T} = \Omega(T^{-1/2})$ can be forced by nature by playing a non-adaptive Bernoulli strategy, drawing independently each $y_t \sim \text{Bernoulli}(p)$ for some $p$. This strategy seems insufficient for deriving a useful lower bound on $\Exp{}{\epsilon\text{-CE}_T}$. 
These comparisons are further discussed in Section~\ref{sec:lower-bound-previous}.

\citet{foster1999proof} showed that calibration is a Blackwell approachability instance (see Section~\ref{sec:calibration-approachability}), and while the rate $f(T) = \Omega(1/\sqrt{T})$ has not been formally established earlier (to the best of our knowledge), it is the rate one expects from a general Blackwell approachability instance \citep[Remark 7.7]{cesa2006prediction}. Instead, the community has looked to establish positive results for alternative notions: calibration with more stringent tests than $\epsilon$-calibration \citep{perchet2015exponential, rakhlin2011online}, calibration where the output space takes more than two values \citep{mannor2010geometric}, calibration with checking rules \citep{lehrer2001any, sandroni2003calibration, vovk2005defensive}, weak calibration \citep{kakade2004deterministic}, and smooth calibration \citep{foster2018smooth}. In particular, while no deterministic forecaster playing Calibration-Game-I can be $\epsilon$-calibrated, there exist deterministic forecasters who are weakly/smoothly calibrated \citep{foster2018smooth}. 

In our work, we take a slightly different approach from these papers. We retain the classical definition of $\epsilon$-calibration but change the calibration game. %
In Calibration-Game-II, also called the Power-Of-Two-Choices (POTC) game, the forecaster reveals two forecasts $p_{t0}, p_{t1} \in [0,1]$, such that $p_{t0} \leq p_{t1}$ and $\abs{p_{t1} - p_{t0}} \leq  2\epsilon$. Since the earlier binning scheme used a $2\epsilon$-grid, that is $1/m = 2\epsilon$, this effectively allows the forecaster to choose a full bin as their forecast (rather than its midpoint), or equivalently to choose two consecutive bin midpoints. Thus there is no randomization, and nature knows the two forecasts. If nature chooses to play $y_t = 0$, $p_t = p_{t0}$ is used to judge the calibration of the forecaster, and if nature chooses to play $y_t = 1$, $p_t = p_{t1}$ is used. (One could say that the forecast closer to reality is used for measuring calibration, or that the forecaster decides which one of the two forecasts to use; these are all equivalent.) Obviously, without the restriction of $p_{t0}, p_{t1}$ being $2\epsilon$-close, the problem is trivial: the forecaster would predict $p_{t0}=0$ and $p_{t1}=1$ in each round, and achieve zero error in every round. Requiring $\abs{p_{t1} - p_{t0}} \leq  2\epsilon$ makes the problem interesting. The POTC setup may appear surprising to some and we devote Section~\ref{sec:motivation} to motivating it. 

The summary of our main result (Theorem~\ref{thm:potc-calibration-fast}) is as follows. In the POTC game, the forecaster can ensure---deterministically---that
\begin{equation}
    \epsilon\text{-CE}_T = O(1/T). \label{eq:potc-fast-rate}
\end{equation}
Compared to~\eqref{eq:eps-calibration-expectation}, there is no expectation operator anymore since the forecaster is deterministic and nature being fully adaptively adversarial does not benefit from randomizing. 

Our forecaster is a variant of \citet{foster1999proof}. While Foster's forecaster randomizes over two nearby forecasts (making it \emph{almost deterministic} in the sense of \citet{foster2021forecast}), our forecaster predicts both these values and is judged with respect to the better one (and is actually, not almost, deterministic).

\begin{remark}[Generalization from binary to bounded outputs]
\label{rem:potc-game}
The POTC game can be modified for bounded, instead of binary, outputs. That is, nature can play $v_t \in [0,1]$ and calibration would be judged with respect to the average of the $v_t$'s on the instances when $p_t = M_i$. Note that this is not the same as nature playing $y_t \sim \text{Bernoulli}(v_t)$, since the calibration loss (left-hand-side of \eqref{eq:potc-fast-rate}) is not linear in $y_t$. With bounded outputs, the same $O(1/T)$ calibration rate can be achieved by a minor modification to our proposed forecasting strategy; see Appendix~\ref{subsec:potc-cal-bounded} for more details.  A similar remark holds for Calibration-Game-I and the corresponding lower bound of $\Omega(1/\sqrt{T})$. This latter fact is evident without further details since the lower bound can only increase if nature is given more flexibility.
\end{remark}

\textbf{Organization.} Section~\ref{sec:motivation} provides further context and motivation for the POTC game. Section~\ref{sec:calibration-potc} presents our algorithm for the POTC game and proves the fast calibration rate of $O(1/T)$ for it (Theorem~\ref{thm:potc-calibration-fast}). Section~\ref{sec:calibration-slow} reviews the well-known equivalence between calibration and Blackwell's approachability theorem \citep{blackwell1956analog}, using which we prove the slow calibration rate of $\Omega(1/\sqrt{T})$ for Calibration-Game-I (Theorem~\ref{thm:slow-calibration}). Most proofs are presented alongside the results. Section~\ref{sec:discussion} concludes with a discussion.%

\section{Motivation for the POTC calibration game}
\label{sec:motivation}
Calibration-Game-II or the POTC game is motivated by two rich fields of literature: imprecise probability and the power of two choices.

\subsection{A practical perspective via imprecise probability}
\label{sec:imprecise-probability}
The reader may wonder what the practical usefulness of the POTC game is. Why would we judge the forecaster in such a manner? The answer is that our earlier problem was phrased in a fashion that makes the connection to the power of two choices transparent. But one can also re-cast the problem in the language of \emph{imprecise probability}. In this area, one is typically not restricted to work with single, unique probability measures, but instead the axioms of probability are relaxed, and added flexibility is provided in order to work with \emph{upper} and \emph{lower} probability measures~\citep{walley1982towards}. 

In the context of our problem, instead of saying that the probability of rain is $0.3$, a forecaster is allowed to say $0.3 \pm \epsilon$. One may just say that the forecaster is slightly uncertain and does not wish to commit to a point forecast, and indeed we may not force a forecaster to announce a point forecast against their will. From a Bayesian or game-theoretic perspective, we may say that the forecaster allows bets against their forecast, represented as a contract which pays off $y_t$, but the forecaster's prices for buying and selling such a contract are slightly different. From a practical perspective,  this type of \emph{interval} forecast arguably has almost the same utility and interpretability to a layman as the corresponding point forecast. The use of upper and lower forecasts (translated to prices or betting odds) is standard in game-theoretic probability~\citep{shafer2019game}. Separately, the recent work of \citet{de2021randomness} establishes that \emph{randomness is inherently imprecise} in a formal sense, and provides a different justification for the use of interval forecasts for binary sequences. 

Remarkably, this small and seemingly insignificant change in reporting leads to a huge change in our ability to achieve calibration. (This gain can be rather puzzling: we were binning/gridding anyway, so why not report a full bin rather than its midpoint? How could that possibly improve our calibration error?!) Of course, we must figure out how to judge the quality of such an interval forecast: we must swap out $\indicator{p_t = M_i} (M_i - y_t)$ in $\text{CE}_T$ for a generalized notion of error that dictates how far $y_t$ was from the forecasted interval $A_t := [p_{t0},p_{t1}]$. To do this, we use the distance from a point to a convex set: we replace $M_i$ with the projection of $y_t$ onto $A_t$, denoted $\text{proj}(y_t,A_t)$. This is exactly what our POTC version does, just expressed differently. 

When we generalize the definition of calibration, it is notationally simpler to  restrict the forecasted interval endpoints to be the same $m$ gridpoints, meaning that $A_t = [M_i,M_{i+1}]$ for some $i \leq m-1$ (rather than $A_t = I_{i}$, the intervals whose midpoints are $M_i$). In this case, we call a method that produces interval forecasts $(I_t)_{t\geq 1}$ as being $\epsilon$-calibrated if:
\begin{equation}
    \max\roundbrack{\sum_{i = 1}^m \abs{\frac{1}{T}\sum_{t = 1}^T \indicator{\text{proj}(y_t,A_t) = M_i} \text{dist}(y_t,A_t)} - \epsilon, 0} = o(1), 
    \label{eq:interval-calibration}
\end{equation}
where $\text{dist}(y,A) := |y - \text{proj}(y,A)|$ is the distance of $y$ to interval $A$. (When the interval is a single point, we recover the original definition of calibration, but in this case we know that randomization is necessary for $\epsilon$-calibration.) 

Imprecise probability has also made an intriguing appearance in the simpler setting commonly considered in machine learning, of achieving calibration in offline binary classification in the presence of i.i.d. data. This is a problem where theoretical progress has been made on designing \emph{distribution-free} algorithms that have calibration guarantees by just assuming that the covariate-label pairs of data are i.i.d., while also performing well on real data~\citep{gupta2020distribution,gupta2021distribution,gupta2022top}. Venn predictors are a class of distribution-free algorithms that produce imprecise probability forecasts \citep{vovk2003self, vovk2014venn}. On observing the covariates of a new point, Venn predictors output a particular interval of probabilities $[p_0,p_1]$ for the unknown binary label. A strong, but slightly odd, calibration property holds: the authors prove that $p_Y$ (a random and unknown prediction, since $Y$ is unknown and random) achieves \emph{exact} calibration in finite samples. 
One can, in some sense, view our work as extending the use of such imprecise interval forecasts to the online calibration setting with adversarial data.

\subsection{The varied applications of the power of two choices}
\label{sec:POTC-review}
The power of two choices (POTC) refers to a remarkable result by \citet{azar1994balanced} for the problem of load balancing. Suppose $n$ balls are placed independently and uniformly at random into $n$ bins. It can be shown that with high probability, the maximum number of balls in a bin (the maximum \emph{load}) will be $\widetilde{\Theta}(\log n)$. %
Consider a different setup where the balls are placed sequentially, and for each ball, two bin indices are drawn uniformly at random and offered to a \emph{load-balancer} who gets to decide which of the two bins to place the ball in. The load-balancer attempts to reduce the maximum load by following a natural strategy: at each step, place the ball in the bin with lesser load. It turns out that with this strategy the maximum load drops exponentially to $\widetilde{\Theta}(\log \log n)$. 

The POTC result has led a number of applications. In a network where one of many servers can fulfil a request, it is exponentially better to choose two servers (instead of one) at random and allocate the server with fewer existing requests \citep{azar1994balanced}. Using two hashes instead of one significantly reduces the load of a single hash bucket \citep{broder2001using}. In circuit routing, selecting one of two possible circuits provably leads to decongestion \citep{cole1998randomized}. When allocating a task to one of many resources where an intensive query needs to be made about the resource capacity, querying two resources is often better than querying all resources, or querying a single resource \citep{azar1994balanced}.  Recently, \citet{dwivedi2019power} used the POTC to develop an online thinning algorithm that produces \emph{low-discrepancy sequences} on hypercubes, with applications to quasi Monte Carlo integration. For further applications and a survey of mathematical techniques, we refer the reader to the thesis of \citet{mitzenmacher1996power}, or the survey by \citet{mitzenmachar2001power}. 

In this paper, we find yet another intriguing phenomenon involving the POTC, this time in the context of calibration. We modify the classical setup of calibration (Calibration-Game-I) to the POTC setup (Calibration-Game-II), by offering the forecaster two nearby choices. We show that this change accords the forecaster with significant power, enabling faster calibration, even without randomization.

\section{Main results: algorithm and analysis}
\label{sec:calibration-potc}

Consider the POTC game (Calibration-Game-II). Recall that the forecaster's probabilities correspond to the mid-points of the intervals $I_1 = [0, 1/m), \ldots, I_m = [1-1/m,1]$, given by $M_1 = 1/2m, \ldots, M_m = 1-1/2m$. The forecaster can play either $(p_{t0}, p_{t1}) = (M_i, M_i)$ or $(p_{t0}, p_{t1}) = (M_i, M_{i+1})$ for some $i$. We can also say that the forecaster predicts one of the two intervals $\{M_i\}$ or $[M_i, M_{i+1}]$ respectively. %

We introduce some notation to describe the algorithm. For $i \in [m] := \{1, 2, \ldots, m\}$ and $t \geq 1$, define:
\begin{align*}
    \text{(left endpoint of interval $i$) }~& l_i = (i-1)/m, \\
    \text{(right endpoint of interval $i$) }~& r_i = i/m,\\
    \text{(frequency of interval $i$) }~& N^t_i = \abs{\{\indicator{p_s = M_i}: s \leq t\}},\\
    \text{(observed average when $M_i$ was forecasted) }~& p^t_i =\begin{cases} \sum_{s=1}^{t} y_s \indicator{p_s = M_i}/N_i^t ~~\text{ if } N^t_i > 0\\ \text{$M_i$~~~~~~~~~~~~~~~~~~~~~~~~~~~~~if $N^t_i = 0$},\end{cases}\\
    \text{(deficit) }~& d_i^t = l_i - p_i^t,\\
    \text{(excess) }~& e_i^t = p_i^t - r_i.
\end{align*}
The terminology `deficit' alludes to the fact that if $p_i^t$ is smaller than desired (to the left of $l_i$), then $d_i^t > 0$ ($p_i^t$ is `in deficit'). `Excess' has the opposite interpretation.

\noindent\fbox{%
    \parbox{\textwidth}{%
    \begin{center}
    \textbf{POTC-Cal: Algorithm for forecaster in Calibration-Game-II }\\(for notation, see Section~\ref{sec:calibration-potc} below)
    \end{center}
    \begin{itemize}
        \item 
    At time $t = 1$, play $(p_{10}, p_{11}) = (M_1, M_1)$. Thus $p_1 = M_1$.  
    \item At time $t+1$ ($t \geq 1)$, if
\begin{align*}
    \text{condition A: there exists an $i \in [m]$ such that $d_i^t \leq 0$ and $e_i^t \leq 0$,}
\end{align*} 
is satisfied, play $(M_i, M_i)$ for any $i$ that verifies condition A (that is, $p_{t+1}=M_i$). Otherwise, 
\begin{align*}
\text{condition B: there exists an $i \in [m-1]$ such that $e_{i}^t > 0$ and $d_{i+1}^t > 0$,}
\end{align*}
must be satisfied (see Lemma~\ref{lemma:cond-a-cond-b}). Play $(M_i, M_{i+1})$ for any index $i$ that verifies condition B (that is, $p_{t+1} = M_i$ if $y_{t+1} = 0$ and $p_{t+1} = M_{i+1}$ if $y_{t+1} = 1$).
\end{itemize}
    }%
}

\subsection{Forecasting algorithm}
The algorithm, presented on top of this page, is a variant of the one proposed by \cite{foster1999proof}. Foster's forecaster isolates two relevant $M_i$'s and randomizes over them; we use the same $M_i$'s to form the reported interval. %
At time $t+1$, if there is a forecast $M_i$ that is already `good' in the sense that $p_i^t \in [l_i, r_i]$, the forecaster predicts $M_i$. Otherwise, the forecaster finds two consecutive values $(M_i, M_{i+1})$ such that $p_i^t$ is in excess and $p_{i+1}^{t}$ is in deficit (such an $i$ exists by Lemma~\ref{lemma:cond-a-cond-b}, Appendix~\ref{appsec:supplementary}). The forecaster plays $(M_i, M_{i+1})$. If nature reveals $y_{t+1} = 0$, then $p_{t+1}=M_i$, and the excess of $p_i^t$ decreases. If nature reveals $y_{t+1} = 1$, then $p_{t+1}=M_{i+1}$, and the deficit of $p_{i+1}^t$ decreases.

\subsection{Analysis of POTC-Cal}

We now present our main result along with a short proof.
\begin{theorem}
\label{thm:potc-calibration-fast}POTC-Cal satisfies, at any time $T \geq 1$, for any strategy of nature,%
\begin{equation}
    \epsilon\text{-CE}_T \leq m/T. \label{eq:potc-fast-rate-again}
\end{equation}

\end{theorem}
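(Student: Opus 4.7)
The plan is to (i) rewrite $T\cdot\text{CE}_T$ as a sum of unnormalized per-bucket errors, decompose each term into an $\epsilon$-slack plus a residual that measures how far the bucket average has drifted out of its target interval $[l_i, r_i]$, and then (ii) show by induction on $t$ that this residual is bounded by a universal per-bucket constant, so that summing over the $m$ buckets gives exactly $m$.

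\textbf{Decomposition.} Since $M_i = l_i + \epsilon = r_i - \epsilon$, a direct case-split on the sign of $p_i^T - M_i$ yields $|M_i - p_i^T| \leq \epsilon + \max(e_i^T,0) + \max(d_i^T,0)$. Introducing the ``signed excess'' $\xi_i^t := N_i^t e_i^t = \sum_{s\leq t: p_s = M_i}(y_s - r_i)$ and ``signed deficit'' $\eta_i^t := N_i^t d_i^t$, multiplying by $N_i^T$ and summing over $i$ gives
\[ T \cdot \text{CE}_T \;\leq\; T\epsilon + \sum_{i=1}^m \bigl[\max(\xi_i^T,0) + \max(\eta_i^T,0)\bigr], \]
so it suffices to show the right-hand sum is at most $m$. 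Observe that $\xi_i^T + \eta_i^T = -N_i^T/m \leq 0$, so at most one of $\max(\xi_i^T,0), \max(\eta_i^T,0)$ is nonzero in each bucket; thus it is enough to establish the invariants $\xi_i^t \leq 1 - r_i$ and $\eta_i^t \leq l_i$ for every $i$ and $t$, since each bucket then contributes at most $\max(1 - r_i, l_i) \leq 1$.

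\textbf{Inductive invariant and main obstacle.} I would prove both invariants simultaneously by induction on $t$, the base case being trivial since $N_i^0 = 0$. The crux of the argument, and what I expect to be the main obstacle, is the case analysis that verifies the invariants whenever bucket $j$ is updated at step $t+1$. If the update is triggered by condition A with index $j$, its preconditions $e_j^t \leq 0$ and $d_j^t \leq 0$ give $\xi_j^t \leq 0$ and $\eta_j^t \leq 0$, and the one-step increments $y_{t+1} - r_j \leq 1-r_j$ and $l_j - y_{t+1} \leq l_j$ preserve both invariants. If condition B with $(j, j+1)$ is triggered and $y_{t+1} = 0$, bucket $j$ is updated with $0$: $\xi_j$ strictly decreases by $r_j$, while for $\eta_j$ the precondition $e_j^t > 0$ forces $\eta_j^t < -N_j^t/m \leq 0$, so $\eta_j^{t+1} = \eta_j^t + l_j \leq l_j$. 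The symmetric sub-case in which bucket $j$ is updated via condition B with $(j-1, j)$ and $y_{t+1}=1$ is handled by swapping the roles of $\xi$ and $\eta$ (using $d_j^t > 0 \Rightarrow \xi_j^t \leq 0$). Summing the per-bucket bounds $\max(\xi_i^T,0) + \max(\eta_i^T,0) \leq \max(1 - r_i, l_i) \leq 1$ across $i \in [m]$ yields the target inequality $\epsilon\text{-CE}_T \leq m/T$. Since POTC-Cal is deterministic and its decisions depend only on the observed past, the bound holds pathwise for \emph{every} sequence produced by nature, with no expectation operator needed.
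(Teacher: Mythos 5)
Your proof is correct and follows essentially the same approach as the paper's, with a slightly different bookkeeping of the per-bucket potential. The paper tracks the single quantity $E_t^{(i)} := N_i^t \max(d_i^t, e_i^t)$ and shows $E_T^{(i)} \leq 1$ via a two-case analysis (condition A: Lemma~\ref{lemma:proof-cond-a}; condition B: Lemma~\ref{lemma:proof-cond-b}); you split it into the signed quantities $\xi_i^t = N_i^t e_i^t$ and $\eta_i^t = N_i^t d_i^t$ with the bucket-specific invariants $\xi_i^t \leq 1-r_i$, $\eta_i^t \leq l_i$, which is a mild refinement since $\max(1-r_i, l_i) \leq 1$ with equality only at the extreme buckets. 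The core observation is identical in both: condition A resets the offending signed quantity to at most the one-step increment, while condition B strictly decreases whichever of $\xi_j,\eta_j$ is positive (using $\xi_j^t + \eta_j^t = -N_j^t/m$) and keeps the other bounded, so the invariant propagates and summing the $m$ per-bucket bounds yields $\epsilon\text{-CE}_T \leq m/T$.
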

\begin{proof}
Consider any $t \geq 1$. We write each of the $m$ terms in the calibration error at time $t$,  $\text{CE}_t$, as follows:
\begin{align*}
    \abs{\frac{1}{t}\sum_{s = 1}^t \indicator{p_s = M_i} (M_i - y_s)}  = \frac{N_i^t\abs{M_i - p_i^t}}{t} = \frac{ N_i^t (\epsilon + \max(d_i^t, e_i^t))}{t}.%
\end{align*}
Define $E_t^{(i)} :=  N_i^t \max(d_i^t, e_i^t)$, and observe that \[\epsilon\text{-CE}_T = \max\roundbrack{\sum_{i = 1}^m \frac{N_i^T\epsilon + E_T^{(i)}}{T} - \epsilon, 0} = \max\roundbrack{\sum_{i = 1}^m 
\frac{E_T^{(i)}}{T}, 0}.\] We will show that for every $i \in [m]$, $E_T^{(i)} \leq 1$, proving the theorem.

Consider some specific $i \in [m]$. If action $i$ is never played, then $N_i^T = 0$, and $E_T^{(i)} = 0$. Suppose an action $i$ has $N_i^T > 0$. For each $1 \leq t < T$, if $a_{t+1} \neq i$, then $E_{t+1}^{(i)} = E_{t}^{(i)}$. If $a_{t+1} = i$, then by Lemmas~\ref{lemma:proof-cond-a} and \ref{lemma:proof-cond-b} (stated and proved below), $E_{t+1}^{(i)} \leq %
\max(E_{t}^{(i)}, 1)$. In other words, at all $t$, the value of $E_{t+1}^{(i)}$ either stays bounded by $1$, or decreases compared to the previous value $E_{t}^{(i)}$. A trivial inductive argument thus implies $E_{T}^{(i)} \leq 1$. For completeness, we verify the base case: since $p_1 = M_1$, $E_1^{(i \neq 1)} = 0$ and $E_1^{(1)} \leq 1$ (as $d_1^1 \leq 0$ and $e_1^1 \leq 1$). %
\end{proof}

\begin{lemma}
Suppose condition A was satisfied at time $t+1$ and the forecast was $p_{t+1} = M_i$. Then, $N_i^{t+1}\max(d_i^{t+1}, e_i^{t+1}) \leq 1$. 
\label{lemma:proof-cond-a} 
\end{lemma}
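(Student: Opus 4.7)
The plan is to unpack the definitions and exploit the fact that condition A at time $t+1$ means $p_i^t \in [l_i, r_i]$ (i.e., the interval $i$ is currently ``good''). Writing everything in terms of the unnormalized sums $N_i^t p_i^t = \sum_{s \leq t} y_s \indicator{p_s = M_i}$ turns the update after playing $M_i$ into a clean identity, and the bounds on $d_i^{t+1}$ and $e_i^{t+1}$ drop out with one line of algebra.

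First I would dispose of the boundary case $N_i^t = 0$. Here $p_i^t = M_i$ by definition, so condition A holds trivially, and after the play $N_i^{t+1} = 1$ and $p_i^{t+1} = y_{t+1} \in \{0,1\}$. Checking both values of $y_{t+1}$ gives $\max(d_i^{t+1}, e_i^{t+1}) \leq \max(l_i, 1 - r_i) \leq 1$, which suffices since $N_i^{t+1} = 1$.

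For the main case $N_i^t \geq 1$, I would use the update $N_i^{t+1} = N_i^t + 1$ and $N_i^{t+1} p_i^{t+1} = N_i^t p_i^t + y_{t+1}$. Condition A gives $N_i^t l_i \leq N_i^t p_i^t \leq N_i^t r_i$. Then
\begin{align*}
N_i^{t+1} d_i^{t+1} &= N_i^{t+1} l_i - (N_i^t p_i^t + y_{t+1}) \leq N_i^{t+1} l_i - N_i^t l_i - y_{t+1} = l_i - y_{t+1} \leq 1,\\
N_i^{t+1} e_i^{t+1} &= (N_i^t p_i^t + y_{t+1}) - N_i^{t+1} r_i \leq N_i^t r_i + y_{t+1} - N_i^{t+1} r_i = y_{t+1} - r_i \leq 1,
\end{align*}
so $N_i^{t+1} \max(d_i^{t+1}, e_i^{t+1}) \leq 1$.

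There isn't really a hard step here: the lemma is essentially an accounting identity once one writes the running averages in unnormalized form. The only subtlety is remembering to treat the $N_i^t = 0$ case separately (since then $p_i^t$ is defined by the fallback rule rather than as an empirical average), but that case is easier than the main one. The analogous Lemma~\ref{lemma:proof-cond-b} for condition B will be where the real work happens, since under condition B the forecaster plays an interval and one has to argue simultaneously about two indices.
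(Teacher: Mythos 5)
Your proposal is correct and is essentially the paper's argument, just repackaged: the paper first bounds the increment $|p_i^{t+1}-p_i^t| = |y_{t+1}-p_i^t|/N_i^{t+1} \le 1/N_i^{t+1}$ (its equation \eqref{eq:small-change-in-errors}) and then adds condition A, whereas you bound the unnormalized quantities $N_i^{t+1}d_i^{t+1}$ and $N_i^{t+1}e_i^{t+1}$ directly; the two are algebraically the same step. One small note: the separate treatment of $N_i^t=0$ is unnecessary, since the identity $N_i^{t+1}p_i^{t+1}=N_i^tp_i^t+y_{t+1}$ and the inequalities $N_i^t l_i \le N_i^t p_i^t \le N_i^t r_i$ hold trivially (all sides are $0$) in that case, so your ``main case'' argument already covers it.
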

\begin{proof}
Since $p_{t+1} = M_i$, $N_i^{t+1} = N_i^{t} + 1$, and $N_i^{t+1}p_i^{t+1} = N_i^t p_i^t + y_{t+1}$. Then, 
\begin{equation}
\begin{aligned}
    \abs{d_i^{t+1} - d_i^t} = 
    \abs{e_i^{t+1} - e_i^t} = 
    \abs{p_i^{t+1} - p_i^t} &= \abs{\frac{N_i^t p_i^t + y_{t+1}}{N_i^{t}+1} - \frac{N_i^t p_i^t + p_i^t}{N_i^{t}+1}}
    \\ &= \abs{\frac{y_{t+1} - p_i^t}{N_i^{t}+1}} \leq \frac{1}{N_i^{t}+1} = \frac{1}{N_i^{t+1}}.
\end{aligned}
\label{eq:small-change-in-errors}
\end{equation}
Since by condition A, $\max(d_i^{t}, e_i^{t}) \leq 0$, we obtain $\max(d_i^{t+1}, e_i^{t+1}) \leq 1/N_i^{t+1} $. 
\end{proof}

\begin{lemma}
Suppose condition A was not satisfied at time $t+1$ and the forecast was $p_{t+1} = M_i$, following condition B. Then  $N_i^{t+1}\max(d_i^{t+1}, e_i^{t+1}) \leq \max(N_i^t\max(d_i^{t}, e_i^{t}), 1)$. %
\label{lemma:proof-cond-b} 
\end{lemma}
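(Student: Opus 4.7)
The plan is to compute directly how the quantities $N_i^t d_i^t$ and $N_i^t e_i^t$ change in the single step $t \to t+1$, exploiting the identity $N_i^{t+1} p_i^{t+1} = N_i^t p_i^t + y_{t+1}$. Condition B triggered the play of some pair $(M_j, M_{j+1})$ with $e_j^t > 0$ and $d_{j+1}^t > 0$, so the realized forecast $p_{t+1} = M_i$ must be either the lower endpoint of that pair ($i = j$, $y_{t+1} = 0$) or the upper endpoint ($i = j+1$, $y_{t+1} = 1$). These two cases are symmetric under swapping the roles of $(d_i, l_i, y = 0)$ with $(e_i, r_i, y = 1)$, so I would write out only the first and remark that the second is analogous.

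For the case $i = j$ with $e_i^t > 0$ and $y_{t+1} = 0$, the identity $N_i^{t+1} p_i^{t+1} = N_i^t p_i^t$ gives
\[
N_i^{t+1} d_i^{t+1} \;=\; N_i^t d_i^t + l_i, \qquad N_i^{t+1} e_i^{t+1} \;=\; N_i^t e_i^t - r_i.
\]
The excess side is immediate: $N_i^{t+1} e_i^{t+1} \leq N_i^t e_i^t$. The deficit side uses the key observation that $e_i^t > 0$ forces $p_i^t > r_i \geq l_i$, so $N_i^t d_i^t \leq 0$, which yields $N_i^{t+1} d_i^{t+1} \leq l_i \leq 1$. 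Since $\max(d_i^{t+1}, e_i^{t+1})$ is attained by one of these two quantities, I would conclude $N_i^{t+1} \max(d_i^{t+1}, e_i^{t+1}) \leq \max(N_i^t e_i^t, 1)$, which is the claimed bound because $N_i^t \max(d_i^t, e_i^t) = N_i^t e_i^t$ here.

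The main conceptual subtlety, and what I would expect to be the chief obstacle on a first attempt, is the \emph{overshoot} phenomenon: starting from $p_i^t > r_i$, a single averaging step could drop $p_i^{t+1}$ below $l_i$ and thereby turn a negative deficit into a positive one. The identity $N_i^{t+1} d_i^{t+1} = N_i^t d_i^t + l_i$ handles this for free, because the right-hand side is bounded by $l_i \leq 1$ regardless of $N_i^t$; this reflects the fact that a single observation can change the running sum $N_i p_i$ by at most one. It is exactly this cap on overshoot that lets the recursion $E_{t+1}^{(i)} \leq \max(E_t^{(i)}, 1)$ in Theorem~\ref{thm:potc-calibration-fast} close up to yield the $O(1/T)$ rate.
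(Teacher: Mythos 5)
Your proof is correct and follows essentially the same route as the paper's: both track the products $N_i^t d_i^t$ and $N_i^t e_i^t$ through the update using $N_i^{t+1}p_i^{t+1}=N_i^tp_i^t+y_{t+1}$, and both exploit that condition B forces the non-active side ($d_i^t$ when $e_i^t>0$) to be negative so its product is capped by $1$ after one step. The only cosmetic difference is that you bound both products exactly and take the max at the end, whereas the paper cases on which of $d_i^{t+1},e_i^{t+1}$ is larger and bounds the deficit case via the increment estimate $\abs{d_i^{t+1}-d_i^t}\leq 1/N_i^{t+1}$; the two are equivalent.
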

\begin{proof}
Suppose $y_{t+1} = 0$. Since we are playing as per condition B, $e_i^t > 0$. Since $d_i^t + e_i^t = l_i - r_i = -1/m$, we have that $d_i^t < 0$. Thus,
\begin{equation*}
    y_{t+1} = 0 \implies \ e_i^t > 0 \text{ and } d_i^t < 0. 
\end{equation*}
Similarly, it can be verified that $y_{t+1} = 1 \implies e_i^t < 0\text{ and } d_i^t > 0$. Below we assume without loss of generality that $y_{t+1} = 0$. (A similar argument goes through for the case $y_{t+1} = 1$.)

We derive how $N_i^t \max(d_i^t, e_i^t)$ changes when going from $t$ to $t+1$. There are two cases:  $e_i^{t+1} \geq d_i^{t+1}$ or $e_i^{t+1} < d_i^{t+1}$. If $e_i^{t+1} \geq d_i^{t+1}$, then  %
\begin{align*}
    N_i^{t+1} \max(d_i^{t+1}, e_i^{t+1}) = N_i^{t+1} e_i^{t+1}  &= N_i^{t+1}p_i^{t+1} - N_i^{t+1}r_i 
    \\ &= N_i^t p_i^t - N_i^{t+1}r_i \quad \text{(since $y_{t+1} = 0$)}
    \\ &= N_i^te_i^t - r_i 
    \\ &= N_i^t\max(d_i^t, e_i^t) - r_i \leq N_i^t\max(d_i^t, e_i^t). 
\end{align*}
On the other hand if $e_i^{t+1} < d_i^{t+1}$, then, 
\begin{align*}
N_i^{t+1} \max(d_i^{t+1}, e_i^{t+1}) &= N_i^{t+1} d_i^{t+1} 
\\ &\leq N_i^{t+1} (d_i^t + \abs{d_i^{t+1} - d_i^t}) 
\\ &\overset{(*)}{<} N_i^{t+1} (0 + 1/N_i^{t+1}) = 1. 
\end{align*}
Inequality $(*)$ holds since $d_i^t < 0$ and $\abs{d_i^{t+1} - d_i^t} \leq 1/N_i^{t+1}$ (see set of equations \eqref{eq:small-change-in-errors}).%

\end{proof}

\section{$\Omega(1/\sqrt{T})$ lower bound for the classical calibration game}
\label{sec:calibration-slow}
Calibration-Game-I can be viewed as a repeated game with vector-valued payoffs/rewards. Such games were studied by \citet{blackwell1956analog}, and are now commonly referred to as Blackwell approachability games. We review the reduction from calibration to Blackwell approachability and use it to prove the lower bound. Throughout this section, we denote the action space of the forecaster as $\Xcal = \{M_1, M_2, \ldots, M_m\}$ and that of nature as $\Ycal = \{0, 1\}$. The random plays of the forecaster lie in $\Delta(\Xcal)$ which is a probability simplex in $m$ dimensions. We embed $\Delta(\Xcal)$ in $\Real^m$ to simplify discussion.

\subsection{Calibration as an instance of Blackwell approachability}
\label{sec:calibration-approachability}

The fact that calibration can be modelled as a Blackwell approachability instance is well-known (since \cite{foster1999proof, hart2000simple}). %
Suppose the actions of the forecaster and nature give a reward %
$r : \Xcal \times \Ycal \to \Real^m$ defined as follows: the $i$-th component of the reward vector $a  = r(p \in \Xcal, y\in \Ycal) \in \Real^m$ is given by \vspace{-0.1cm}
\begin{equation}
    a_i = \indicator{p = M_i} \cdot (M_i - y). \label{eq:reward-blackwell}
\end{equation} 
Let $\bar{a}^T := \sum_{i=1}^T r(p_t, y_t)/T$ be the average reward vector given component-wise by $\bar{a}^T_i = \sum_{t = 1}^T \indicator{p_t = M_i}(M_i - y_t)/T$. Let $B_\epsilon$ be the $\ell_1$-ball with radius $\epsilon$, and $\text{dist}$ the $\ell_1$-distance function. Note that
\[
\text{dist}(\bar{a}^T, B_\epsilon) = \max
\roundbrack{{\sum_{i = 1}^m \abs{\frac{1}{T}\sum_{t = 1}^T \indicator{p_t = M_i} (M_i - y_t)}} - \epsilon, 0} = \epsilon\text{-CE}_T. 
\]
Thus, the $\epsilon$-calibration condition \eqref{eq:eps-calibration-expectation} is equivalent to $\lim_{T \to \infty}\Exp{}{\text{dist}(\bar{a}^T, B_\epsilon)} = 0$. %
If this condition is satisfied, we say that $\bar{a}^T$ \emph{approaches} $B_\epsilon$ in the limit. %
\citet{blackwell1956analog} established necessary and sufficient conditions for approachability.

\begin{theorem}[Corollary to Theorem 3 by \citet{blackwell1956analog}]
\label{thm:response-approachability}
Assume the same setup as Calibration-Game-I, but the players receive a vector-valued reward $r$, as defined component-wise in \eqref{eq:reward-blackwell}. The forecaster can ensure that $\lim_{T \to \infty}\Exp{}{\text{dist}(\bar{a}^T, B_\epsilon)} = 0$ if and only if for every $v \in \Delta(\Ycal) = [0,1]$, there exists $u \in \Delta(\Xcal)$ such that the expected reward belongs to $B_\epsilon$: \vspace{-0.2cm}
\begin{equation}
    \forall v \in \Delta(\Ycal),\ \exists u \in \Delta(\Xcal) : \Exp{p \sim u, y \sim v}{ r(p, y)} = \sum_{i = 1}^m u_i ((1-v)\cdot r(M_i, 0) + v\cdot r(M_i, 1)) \in B_\epsilon. \label{eq:response-satisfiability}
\end{equation}
\end{theorem}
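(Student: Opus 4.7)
The plan is to derive the statement as a specialization of Blackwell's approachability theorem \citep{blackwell1956analog}, instantiated on the vector-valued game with reward $r$ and target set $B_\epsilon$. Blackwell's theorem establishes, for a repeated two-player game with bounded rewards $r : \Xcal \times \Ycal \to \Real^m$ and a closed convex target $S$, the following equivalence: player 1 can ensure $\Exp{}{\text{dist}(\bar{a}^T, S)} \to 0$ if and only if for every $v \in \Delta(\Ycal)$ there exists $u \in \Delta(\Xcal)$ with $\Exp{p \sim u, y \sim v}{r(p,y)} \in S$. The proof amounts to checking the hypotheses and specializing.

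The hypotheses are easy: $B_\epsilon$ is the closed, bounded, convex $\ell_1$-ball of radius $\epsilon$ around the origin, and each coordinate of $r(p,y)$ is bounded by $1$ in absolute value because $M_i \in [0,1]$ and $y \in \{0,1\}$. With $S = B_\epsilon$, Blackwell's condition becomes exactly \eqref{eq:response-satisfiability}.

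For completeness, I would sketch both directions. Necessity: if \eqref{eq:response-satisfiability} fails, there exists $v^* \in \Delta(\Ycal)$ such that the compact convex set $K := \{\Exp{p \sim u, y \sim v^*}{r(p,y)} : u \in \Delta(\Xcal)\}$ is disjoint from $B_\epsilon$, so $\text{dist}(K, B_\epsilon) \geq \delta$ for some $\delta > 0$. If nature plays the non-adaptive strategy $v_t = v^*$ at every round, a short calculation using independence of $y_t$ from the history-determined forecast $p_t$ shows $\Exp{}{\bar{a}^T} \in K$ for every (possibly randomized, adaptive) forecaster strategy. Jensen's inequality applied to the convex function $\text{dist}(\cdot, B_\epsilon)$ then gives $\Exp{}{\text{dist}(\bar{a}^T, B_\epsilon)} \geq \text{dist}(\Exp{}{\bar{a}^T}, B_\epsilon) \geq \delta$, contradicting approachability.

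Sufficiency is Blackwell's geometric construction. At each time $t$, project $\bar{a}^{t-1}$ onto $B_\epsilon$ to obtain $c^{t-1}$, and use \eqref{eq:response-satisfiability} combined with a minimax argument to choose $u_t \in \Delta(\Xcal)$ such that for every $v \in \Delta(\Ycal)$, $\inner{\Exp{p \sim u_t, y \sim v}{r(p,y)} - c^{t-1}, \bar{a}^{t-1} - c^{t-1}} \leq 0$, i.e.\ the one-step expected reward lies in the halfspace through $c^{t-1}$ with outward normal $\bar{a}^{t-1} - c^{t-1}$. A standard telescoping bound on $\|\bar{a}^T - c^T\|_2^2$ then yields an $O(1/\sqrt{T})$ approach rate. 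The main obstacle is this geometric step---turning a per-$v$ response guarantee into a single $u_t$ that is simultaneously good against all $v$ in the relevant halfspace---but this is exactly Blackwell's contribution and I would cite it rather than reprove it.
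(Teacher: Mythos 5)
The paper gives no proof of this statement: it is labeled ``Corollary to Theorem 3 by Blackwell (1956)'' and the reduction is left to the citation, which is exactly the route you take. Your instantiation of Blackwell's theorem with $S = B_\epsilon$ and the reward in \eqref{eq:reward-blackwell} is correct, and the two directions you sketch (necessity via a compact-convex separation and Jensen applied to the convex function $\text{dist}(\cdot, B_\epsilon)$ under the non-adaptive strategy $v_t \equiv v^*$; sufficiency via Blackwell's projection argument with the minimax step turning per-$v$ response-satisfiability into halfspace-satisfiability) are the standard ones and handle the fact that nature moves after seeing $u_t$.
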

\noindent The hypothetical situation considered in the theorem is akin to a one-shot game but with the order of the players reversed: nature plays $v$ first and the forecaster responds with $u$. If the forecaster can \emph{respond} to every play by nature and ensure that the expected reward lies in $B_\epsilon$, then the forecaster can ensure that $\bar{a}^T$ approaches $B_\epsilon$ in the sequential game (where nature goes second each time).  \citet{abernethy2011blackwell} call this \emph{response-satisfiability}; in their words, Theorem~\ref{thm:response-approachability} is interepreted as response-satisfiability $\iff$ approachability. %

\begin{proposition}\label{prop:satisfiable-red-blackwell}
The forecaster can exhibit response satisfiability \eqref{eq:response-satisfiability}. Thus the forecaster playing Calibration-Game-I can ensure $\lim_{T \to \infty}\Exp{}{\text{dist}(\bar{a}^T, B_\epsilon)} = 0$ and be $\epsilon$-calibrated \eqref{eq:eps-calibration-expectation}. %
\end{proposition}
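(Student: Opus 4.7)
The plan is to exhibit, for each $v \in \Delta(\Ycal) = [0,1]$, an explicit forecaster response $u = u(v) \in \Delta(\Xcal)$ whose expected reward lies in $B_\epsilon$. Once this is done, Proposition~\ref{prop:satisfiable-red-blackwell} follows directly from Theorem~\ref{thm:response-approachability}.

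First, I will unpack the expected reward formula. For any $u \in \Delta(\Xcal)$ and $v \in [0,1]$, applying \eqref{eq:reward-blackwell} and linearity of expectation shows that the $i$-th component of $\Exp{p \sim u, y \sim v}{r(p,y)}$ is $u_i(M_i - v)$. Hence $\onenorm{\Exp{}{r(p,y)}} = \sum_{i=1}^m u_i \abs{M_i - v}$, and response satisfiability amounts to choosing $u$ so that this weighted distance is at most $\epsilon$.

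Next, I will construct $u$ by cases, depending on where $v$ falls on the $2\epsilon$-grid $M_1 < M_2 < \cdots < M_m$. If $v \leq M_1 = \epsilon$, set $u_1 = 1$ (all mass on $M_1$) and note $\onenorm{\Exp{}{r(p,y)}} = M_1 - v \leq \epsilon$; symmetrically, if $v \geq M_m = 1-\epsilon$, set $u_m = 1$ and obtain $\onenorm{\Exp{}{r(p,y)}} = v - M_m \leq \epsilon$. Otherwise $v \in [M_i, M_{i+1}]$ for some $i \in [m-1]$, and I write $v = \alpha M_i + (1-\alpha) M_{i+1}$ for the unique $\alpha \in [0,1]$. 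Put $u_i = \alpha$, $u_{i+1} = 1-\alpha$, and zero on all other coordinates. Substituting this expression for $v$ and using $M_{i+1} - M_i = 2\epsilon$, a short computation yields $u_i(M_i - v) = -2\alpha(1-\alpha)\epsilon$ and $u_{i+1}(M_{i+1} - v) = 2\alpha(1-\alpha)\epsilon$, so $\onenorm{\Exp{}{r(p,y)}} = 4\alpha(1-\alpha)\epsilon \leq \epsilon$, using $\alpha(1-\alpha) \leq 1/4$.

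The argument is entirely elementary, so I do not anticipate a genuine obstacle. The only mild subtlety is correctly handling the boundary cases $v \leq \epsilon$ and $v \geq 1-\epsilon$, where $v$ is not bracketed by two consecutive grid points; these are covered by the degenerate single-atom choices above. Combining the three cases yields response satisfiability for every $v \in [0,1]$, and an appeal to Theorem~\ref{thm:response-approachability} concludes the proof.
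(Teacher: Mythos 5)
Your proposal is correct and follows essentially the same route as the paper: both reduce to exhibiting, for each $v$, a response $u$ supported on the grid point(s) nearest $v$, after computing the $i$-th expected reward component as $u_i(M_i - v)$. The only difference is that the paper uses the even simpler pure response $u_j = 1$ for the $j$ with $v \in I_j$, giving $\onenorm{\Exp{}{r(p,y)}} = \abs{M_j - v} \leq \epsilon$ immediately, whereas you interpolate with a mixed response on $\{M_i, M_{i+1}\}$ and bound $4\alpha(1-\alpha)\epsilon \leq \epsilon$; both computations are valid and your boundary cases are handled correctly.
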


The proof of this well-known result is in Appendix~\ref{subsec:proof-prop:satisfiable-red-blackwell}. A second question is of the rate at which the expected reward vector approaches the desired set. As reviewed in the introduction, a number of papers have shown that the rate of approachability for the $\epsilon$-calibration game is $O(1/\sqrt{T})$. %
We show that this rate cannot be improved.%

\subsection{$\Omega(1/\sqrt{T})$ lower bound for the $\epsilon$-calibration error rate} 
\begin{theorem}
\label{thm:slow-calibration}
A forecaster playing Calibration-Game-I against an adversarial nature cannot achieve $\epsilon$-calibration at a rate faster than $O(1/\sqrt{T})$. That is, for every strategy of the forecaster, there is a strategy of nature that ensures  
\begin{equation}
    \Exp{}{\epsilon\text{-CE}_T} = \Omega(1/\sqrt{T}). \label{eq:eps-calibration-slow}
\end{equation}
\end{theorem}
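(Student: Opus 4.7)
The plan is to exhibit an explicit non-adaptive i.i.d.\ strategy for nature and reduce the calibration lower bound to the anti-concentration of a bounded martingale. The key observation is that if nature plays $y_t\sim\mathrm{Bernoulli}(p^\star)$ with $p^\star$ chosen to be \emph{off the forecast grid} $\{M_1,\ldots,M_m\}$, then every bin contributes at least $\epsilon$ to the (deterministic part of the) calibration error, and fluctuations of order $1/\sqrt T$ around $\epsilon$ push $\epsilon\text{-CE}_T$ above $0$ with constant probability.

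\textbf{Step 1 (pick $p^\star$).} Take $p^\star=1/2$ if $m$ is even and $p^\star=1/2+\epsilon$ if $m$ is odd. In both cases one checks that $p^\star$ is a ``half-grid'' point, so $\min_i |M_i-p^\star|=\epsilon$ and $p^\star(1-p^\star)\ge 1/4-\epsilon^2$, a constant bounded away from $0$. Nature plays $y_t\sim\mathrm{Bernoulli}(p^\star)$ independently of history.

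\textbf{Step 2 (algebraic lower bound on $\text{CE}_T$).} Let $Z_i := S_i^T - p^\star N_i^T$ where $S_i^T=\sum_{t\le T}\indicator{p_t=M_i} y_t$, and let $\sigma_i := \mathrm{sign}(M_i-p^\star)\in\{\pm 1\}$. Writing the $i$-th reward coordinate as $(M_i-p^\star)N_i^T-Z_i$ and using $|x|\ge \sigma_i x$,
\begin{equation*}
T\cdot\text{CE}_T \;=\; \sum_{i=1}^m \abs{(M_i-p^\star)N_i^T - Z_i} \;\ge\; \sum_{i=1}^m |M_i-p^\star|\,N_i^T \;-\; W_T, \qquad W_T:=\sum_{i=1}^m \sigma_i Z_i \;=\; \sum_{t=1}^T \sigma_{p_t}(y_t-p^\star).
\end{equation*}
Since $\min_i|M_i-p^\star|=\epsilon$ and $\sum_i N_i^T=T$, the deterministic part is $\ge \epsilon T$, hence $\text{CE}_T\ge\epsilon-W_T/T$, and therefore
$
\epsilon\text{-CE}_T \;\ge\; \max(-W_T/T,\,0).
$

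\textbf{Step 3 (anti-concentration of $W_T$).} Because $y_t$ is independent of $(\mathcal F_{t-1},u_t,p_t)$ when nature is non-adaptive, the sequence $\sigma_{p_t}(y_t-p^\star)$ is a martingale-difference sequence with $|\cdot|\le 1$ and conditional variance $p^\star(1-p^\star)\ge c>0$. Thus $W_T$ is a mean-zero martingale with $\mathrm{Var}(W_T)=T\,p^\star(1-p^\star)=\Theta(T)$ uniformly in the forecaster's strategy. By the martingale CLT (or a Berry--Esseen-type quantitative version for bounded MDS), $W_T/\sqrt T$ converges to a Gaussian with variance $p^\star(1-p^\star)$, and by symmetry $\Exp{}{(-W_T)_+}=\Exp{}{(W_T)_+}=\Omega(\sqrt T)$.

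\textbf{Step 4 (conclude).} Combining Steps 2 and 3, $\Exp{}{\epsilon\text{-CE}_T}\ge \Exp{}{(-W_T)_+}/T=\Omega(1/\sqrt T)$, matching the $O(1/\sqrt T)$ upper bounds cited in Section~\ref{sec:main-lit-review}. The main technical obstacle is making Step 3 uniform in the (possibly adversarial) forecaster strategy and in $T$: the asymptotic martingale CLT gives the right order of magnitude but one must either invoke a quantitative CLT or, more elementarily, combine the variance lower bound $\Exp{}{W_T^2}=\Theta(T)$ with a Rosenthal/Burkholder fourth-moment bound $\Exp{}{W_T^4}=O(T^2)$ and a Paley--Zygmund-type inequality to deduce $\Exp{}{|W_T|}=\Omega(\sqrt T)$ with a constant that does not depend on the forecaster. (An alternative, more abstract route is the non-constructive Blackwell-approachability lower bound of \citet{mannor2013approachability}, but the direct CLT argument above is self-contained and transparent.)
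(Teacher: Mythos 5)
Your proof is correct, and it takes a genuinely different route from the paper's. The paper proves Theorem~\ref{thm:slow-calibration} non-constructively, by verifying that $B_\epsilon$ satisfies the two structural conditions (minimal approachability and mixed approachability) required by the lower bound of \citet{mannor2013approachability}; you instead exhibit an explicit, oblivious strategy for nature. Your Step 2 algebra checks out: with $p^\star$ an interior bin boundary, each of the $m$ summands in $T\cdot\text{CE}_T$ satisfies $\abs{(M_i-p^\star)N_i^T - Z_i} \geq \epsilon N_i^T - \sigma_i Z_i$, and since $\sum_i N_i^T = T$ this gives $\epsilon\text{-CE}_T \geq \max(-W_T/T,0)$. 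The ``technical obstacle'' you flag in Step 3 is discharged exactly as you suggest: because $y_t$ is drawn independently of the history and of $p_t$, each increment $\sigma_{p_t}(y_t-p^\star)$ has conditional mean $0$ and conditional variance exactly $p^\star(1-p^\star)$, so $\Exp{}{W_T^2}=Tp^\star(1-p^\star)$ for \emph{every} forecaster, $\Exp{}{W_T^4}=O(T^2)$ by Rosenthal/Burkholder, and H\"older yields $\Exp{}{\abs{W_T}}\geq(\Exp{}{W_T^2})^{3/2}/(\Exp{}{W_T^4})^{1/2}=\Omega(\sqrt T)$ with a forecaster-independent constant; combined with $\Exp{}{W_T}=0$ this gives $\Exp{}{(-W_T)_+}=\Exp{}{\abs{W_T}}/2=\Omega(\sqrt T)$, hence the claim. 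What your approach buys is a constructive, elementary, self-contained proof showing that the $\Omega(1/\sqrt T)$ barrier is already forced by a \emph{non-adaptive} i.i.d.\ adversary; what the paper's approach buys is modularity, since the approachability machinery applies to other target sets and reward structures without having to guess nature's strategy.

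One consequence deserves emphasis: your argument contradicts the paper's own Section~\ref{sec:lower-bound-previous} and Appendix~\ref{appsec:bernoulli-strategy}, which assert that the Bernoulli strategy ``seems insufficient'' for lower-bounding $\Exp{}{\epsilon\text{-CE}_T}$ and conjecture that a forecaster can achieve $\Exp{}{\max(\text{CE}_T-\epsilon,0)}=O(\text{poly}(\log T)/T)$ against it. Your construction refutes that conjecture when $p^\star$ sits on a bin boundary. There is no inconsistency with Proposition~\ref{prop:bernoulli-rate}, which only controls $\Exp{}{\text{CE}_T}-\epsilon$: that quantity can be $O(\text{poly}(\log T)/T)$ while $\text{CE}_T$ still fluctuates around $\epsilon$ at scale $1/\sqrt T$, and it is precisely the positive part of this fluctuation, $\Exp{}{(-W_T)_+}/T$, that no forecaster can suppress---for instance, with $m=2$ and $p^\star=1/2$, the increments $\sigma_{p_t}(y_t-1/2)$ are conditionally symmetric $\pm\tfrac12$ regardless of the forecaster's bin choice, so $W_T$ is distributed as a symmetric random walk for every strategy. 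If you write this up, you should state this consequence explicitly, since it sharpens the paper's discussion of which adversary strategies suffice for the lower bound.
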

\citet{mannor2013approachability} analyzed the convergence rate in approachability games and characterized conditions, which if satisfied by a target set $\Ccal$, entail that nature can ensure $\Exp{}{\text{dist}(\bar{a}^T, \Ccal)} = \Omega(1/\sqrt{T})$. In particular, if these conditions are satisfied $B_\epsilon$, Theorem~\ref{thm:slow-calibration} follows immediately since $\text{dist}(\bar{a}^T, \Ccal) = \epsilon\text{-CE}_T$. %

\begin{theorem}[Theorem 6.ii by \citet{mannor2013approachability}] 
\label{thm:slow-approachabilty}
Let $\Ccal$ be a closed convex set that is (i) minimal approachable, and (ii) mixed approachable. Then $\Ccal$ cannot be approached at a rate faster than $O(1/\sqrt{T})$, or in other words,  $\Exp{}{\text{dist}(\bar{a}^T, \Ccal)} = \Omega(1/\sqrt{T})$, where $\bar{a}^T$ is the average reward vector. 
\end{theorem}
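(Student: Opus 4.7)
The plan is to exhibit a fixed ``hard'' play by nature against which the forecaster must randomize non-trivially, and then invoke a martingale central-limit-type lower bound to show that the realized average deviates from $\mathcal{C}$ at rate $\Omega(1/\sqrt{T})$ along a specific direction. I would begin by using mixed approachability to extract some $v^\star \in \Delta(\mathcal{Y})$ such that no pure forecaster action $p$ achieves $\mathbb{E}_{y \sim v^\star}[r(p,y)] \in \mathcal{C}$; Blackwell satisfiability then forces a genuinely mixed response $u^\star$ whose induced reward vector $r(p, y^\star)$ (with $p \sim u^\star$, $y^\star \sim v^\star$) has non-trivial spread. Minimal approachability implies that the attainable expected reward $\mathbb{E}[r(p,y^\star)]$ must lie on the boundary of $\mathcal{C}$: otherwise one could carve out a thin convex shell around an interior attainable point and still approach the resulting proper subset against $v^\star$, contradicting minimality. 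Fix an outer normal $w$ to a supporting hyperplane of $\mathcal{C}$ at this boundary point; compactness of the set of responses $u$ consistent with expected reward in $\mathcal{C}$ and continuity of $\mathrm{Var}_{u,v^\star}(\langle w, r\rangle)$ then yield a uniform lower bound $\sigma_0^2 > 0$ on this variance.

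Next, let nature play $v^\star$ at every round, regardless of history. Define $Z_t := \langle w, r(p_t,y_t)\rangle$ with conditional mean $z_t := \mathbb{E}[Z_t \mid \mathcal{F}_{t-1}]$; the differences $X_t := Z_t - z_t$ form a bounded martingale difference sequence. Either the forecaster's conditional expected reward $\mathbb{E}[r(p_t,y_t) \mid \mathcal{F}_{t-1}]$ lies in $\mathcal{C}$ for a constant fraction of rounds---in which case the conditional variance of $X_t$ is at least $\sigma_0^2$ on that subsequence and $\mathrm{Var}(\bar{X}^T) \gtrsim \sigma_0^2/T$---or it fails to lie in $\mathcal{C}$ for a constant fraction of rounds, in which case Jensen's inequality already forces $\mathrm{dist}(\bar a^T, \mathcal{C}) = \Omega(1)$ and we are done. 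In the first case, a Paley--Zygmund inequality applied to $\bar{X}^T := T^{-1}\sum_t X_t$, combined with its bounded fourth moment, produces $\mathbb{E}|\bar{X}^T| = \Omega(1/\sqrt{T})$; a martingale Berry--Esseen bound could substitute for cleaner constants.

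The final step translates this scalar fluctuation into a distance lower bound on $\mathrm{dist}(\bar a^T, \mathcal{C})$. Let $\bar z^T := T^{-1}\sum_t z_t$; by convexity (in the non-trivial case above), $\bar z^T \in \mathcal{C}$. The supporting hyperplane property from the first step implies that any point $x$ with $\langle w, x - \bar z^T\rangle > 0$ satisfies $\mathrm{dist}(x, \mathcal{C}) \geq \langle w, x - \bar z^T\rangle$, so setting $x = \bar a^T$ gives $\mathrm{dist}(\bar a^T, \mathcal{C}) \geq (\bar{X}^T)_+$. Because $\bar{X}^T$ is zero-mean, $\mathbb{E}[(\bar X^T)_+] = \tfrac{1}{2}\mathbb{E}|\bar X^T| = \Omega(1/\sqrt{T})$, completing the argument.

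The main obstacle is the geometric step relating minimality to a \emph{universal} direction $w$ and anchor point $\bar z^T \in \mathcal{C}$: the forecaster can adapt its conditional means $z_t$ over time, so the relevant supporting hyperplane might a priori vary, allowing fluctuations in $w$ at one step to be ``canceled'' by counter-fluctuations at another. Minimal approachability must be leveraged to pin down at least one direction that is uniformly hard across every adaptive sequence of responses compatible with response-satisfiability against $v^\star$---intuitively, if no such $w$ existed, every boundary direction of $\mathcal{C}$ could be absorbed by a pure response (contradicting mixed approachability) or $\mathcal{C}$ could be shrunk (contradicting minimality). Making this selection rigorous, likely via a compactness argument on the normal fan of $\mathcal{C}$ combined with the minimax structure underlying Blackwell's condition \eqref{eq:response-satisfiability}, is where the bulk of the technical work lies.
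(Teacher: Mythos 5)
This theorem is imported from \citet{mannor2013approachability} (their Theorem~6.ii); the paper does not prove it, but only verifies in Lemmas~\ref{prop:minimal-approachability} and~\ref{prop:mixed-approachability} that its two hypotheses hold for $B_\epsilon$. There is therefore no internal proof to compare against.

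Your blind attempt does reconstruct the right broad anatomy---have nature play an i.i.d.\ mixed strategy $v^\star$ associated with a direction $q$ witnessing the mixed-approachability gap, project onto a fixed direction $w$, and run a martingale anti-concentration argument---but the step you yourself flag at the end remains a genuine gap, so the attempt is not a proof. Three concrete issues. (i) You have not established that the forecaster's time-varying conditional means $\bar z^T$ stay anchored to a single supporting hyperplane of $\Ccal$ with a fixed outward normal $w$; minimal approachability alone does not force this, and if $\bar z^T$ drifts into the interior the inequality $\mathrm{dist}(\bar a^T, \Ccal) \geq \langle w, \bar a^T - \bar z^T\rangle_+$ fails, allowing fluctuations at different rounds to cancel across different normal directions. (ii) The uniform variance lower bound $\sigma_0^2 > 0$ for $\langle w, r\rangle$ over responses $u$ consistent with response-satisfiability must come from $\vals(q) > 0$ (the fact that the support of an optimal $u^\star$ necessarily straddles the target hyperplane), not from compactness plus continuity alone, and needs to be localized to the relevant face of $\Ccal$ rather than all of it. (iii) Converting $\sum_t \mathrm{Var}(X_t \mid \mathcal{F}_{t-1}) = \Omega(T)$ into $\mathbb{E}\abs{\bar X^T} = \Omega(1/\sqrt{T})$ requires an actual martingale Berry--Esseen or Burkholder-with-fourth-moment argument, which you gesture at but do not supply. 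These are precisely where the cited proof does its substantive work, so your sketch names the correct ingredients while leaving the load-bearing claims unproven.
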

\vspace{-3mm}

In what follows, we introduce the conditions (i) minimal approachability and (ii) mixed approachability in the context of our calibration game, and show that they are satisfied by $B_{\epsilon}$ (Lemma~\ref{prop:minimal-approachability} and Lemma~\ref{prop:mixed-approachability} respectively). %

\subsection{Minimal approachability}
For a point $u \in \Real^m$ and a convex set $K \subseteq \Real^m$, define the distance of $u$ from $K$ as $d_K(u) = \inf_{u' \in K}\twonorm{u - u'}$. For any $\lambda > 0$, a convex set $K' \subseteq K$ is said to be a $\lambda$-shrinkage of $K$ %
if $\{u : d_{K'}(u)\leq \lambda\} \subseteq K$. In the following definition of minimal approachability, we implicitly assume that the set of action sets of the players and the corresponding rewards have been fixed, and the goal is to characterize which convex sets are approachable and which are not. 
\begin{definition}
A set $K$ is \emph{minimal} approachable if $K$ is approachable, but no $\lambda$-shrinkage of $K$ is approachable.
\end{definition}

We now show the first condition required by Theorem~\ref{thm:slow-approachabilty}. 
\begin{lemma}
\label{prop:minimal-approachability}
The set $B_\epsilon$ is minimal approachable.
\end{lemma}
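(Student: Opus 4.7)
My plan is to invoke Theorem~\ref{thm:response-approachability} in both directions: approachability of $B_\epsilon$ itself is already handed to us by Proposition~\ref{prop:satisfiable-red-blackwell}, so what remains is to show that for every $\lambda > 0$, no $\lambda$-shrinkage $K' \subseteq B_\epsilon$ is response-satisfiable. The strategy is to exhibit a single play $v^\star \in [0,1]$ by nature for which every achievable expected reward either falls outside $B_\epsilon$ or lies exactly on the $\ell_1$-boundary $\partial B_\epsilon$, and therefore cannot lie in $K'$.

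First I would expand, for arbitrary $u \in \Delta(\Xcal)$ and $v \in [0,1]$, the expected reward vector via \eqref{eq:reward-blackwell}: its $i$-th coordinate is $u_i(M_i - v)$, so $\onenorm{\Exp{p \sim u,\, y \sim v}{r(p,y)}} = \sum_{i=1}^m u_i \abs{M_i - v}$. I would then pick $v^\star = 2\epsilon$, the breakpoint between the first two grid intervals $I_1$ and $I_2$ (valid since $m \geq 2$). Because $\abs{M_1 - v^\star} = \abs{M_2 - v^\star} = \epsilon$ while $\abs{M_i - v^\star} = (2i-3)\epsilon \geq 3\epsilon$ for $i \geq 3$, one has $\sum_i u_i \abs{M_i - v^\star} \geq \epsilon$, with equality if and only if $u$ is supported on $\{M_1, M_2\}$. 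In the equality case the expected reward equals $(-u_1 \epsilon,\, u_2 \epsilon,\, 0, \dots, 0)$, a vector of $\ell_1$-norm exactly $\epsilon$ that therefore sits on $\partial B_\epsilon$; for every other $u$, the expected reward falls strictly outside $B_\epsilon$.

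To close the argument I would translate the $\lambda$-shrinkage condition into a margin statement: since every point at $\twonorm{\cdot}$-distance at most $\lambda$ from $K'$ lies in $B_\epsilon$, every point outside $B_\epsilon$ has $\twonorm{\cdot}$-distance strictly greater than $\lambda$ from $K'$. Using continuity of $d_{K'}$ and approximating any boundary point $x \in \partial B_\epsilon$ by a sequence drawn from the complement of $B_\epsilon$, I obtain $d_{K'}(x) \geq \lambda > 0$, so $\partial B_\epsilon \cap K' = \emptyset$. Combined with the previous paragraph, no $u$ can place the expected reward in $K'$ when nature plays $v^\star$; thus response-satisfiability fails and Theorem~\ref{thm:response-approachability} yields non-approachability of $K'$, completing the minimal-approachability proof.

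The only delicacy I anticipate is keeping the $\ell_1$-geometry of $B_\epsilon$ and the $\ell_2$-definition of shrinkage aligned---specifically, verifying that a boundary point of the $\ell_1$-ball is indeed at $\ell_2$-distance at least $\lambda$ from every $\lambda$-shrinkage---but the closure argument above handles this uniformly without any fine norm comparison.
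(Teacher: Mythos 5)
Your proposal is correct. It rests on the same adversarial play $v^\star = 2\epsilon = 1/m$ as the paper, and the same key observation that for this $v^\star$ every expected reward vector has $\ell_1$-norm at least $\epsilon$, with equality only when $u$ is supported on $\{M_1, M_2\}$. The difference is in how the two proofs establish that a $\lambda$-shrinkage $K'$ cannot contain any such reward vector. The paper proves the stronger, quantitative inclusion $K' \subseteq B_{\epsilon-\lambda}$: it takes a hypothetical $u \in K'$ with $\onenorm{u} > \epsilon - \lambda$, pushes it outward by $\lambda$ in the $\ell_2$-direction, and uses $\onenorm{\cdot} \geq \twonorm{\cdot}$ to derive a contradiction; once $K' \subseteq B_{\epsilon-\lambda}$, the bound $\onenorm{\Exp{}{r}} \geq \epsilon > \epsilon - \lambda$ finishes immediately. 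You instead prove only the qualitatively weaker statement $K' \cap \partial B_\epsilon = \emptyset$ via Lipschitz continuity of $d_{K'}$ and approximation of boundary points from the exterior of $B_\epsilon$; this suffices because the reward vectors you need to exclude lie either strictly outside $B_\epsilon$ or exactly on $\partial B_\epsilon$. Your route avoids the explicit norm comparison but requires an extra case split (on-boundary versus outside) and a limiting argument; the paper's route is slightly more direct once the ball inclusion is in hand, at the cost of that inclusion lemma. Both are clean; neither is meaningfully more general. One small point to make explicit: you implicitly apply Theorem~\ref{thm:response-approachability} with $K'$ in place of $B_\epsilon$, which is fine since Blackwell's result (and the paper's own use of it on $B_{\epsilon-\lambda}$) holds for arbitrary closed convex target sets, but it is worth flagging that the theorem as stated in the paper mentions only $B_\epsilon$.
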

\begin{proof}
Proposition~\ref{prop:satisfiable-red-blackwell} shows that $B_\epsilon$ is approachable so it remains to show that the minimality condition holds. Let $K$ be a $\lambda$-shrinkage of $B_\epsilon$ for some $\lambda > 0$. We first argue that $K \subseteq B_{\epsilon - \lambda}$. Suppose this were not the case, that is, there exists $u \in K$ such that $u \notin B_{\epsilon - \lambda}$. %
By definition of the $\ell_1$-ball $B_{\epsilon - \lambda}$, this means that $\onenorm{u} > \epsilon - \lambda$. We will show that such a $u$ cannot belong to any $\lambda$-shrinkage of $B_\epsilon$, in particular it cannot belong to $K$, leading to a contradiction.

Consider the point $u' = u + (\lambda/\twonorm{u})u$. Note that $d_{K}(u') \leq \twonorm{u - u'} = \lambda$. Since $K$ is a $\lambda$-shrinkage of $B_\epsilon$, this implies that $u' \in B_{\epsilon}$, or $\onenorm{u'} \leq \epsilon$. On the other hand, we have,
\begin{align*}
    (\epsilon \geq)\ \onenorm{u'} &= \onenorm{u}(1 + \lambda/\twonorm{u})
                \\&\geq \onenorm{u}(1 + \lambda/\onenorm{u}) &\text{(for any vector $v \in \Real^m$, $\onenorm{v} \geq \twonorm{v}$)}
                \\&=\onenorm{u} + \lambda >(\epsilon - \lambda) + \lambda = \epsilon,
\end{align*}
which is a contradiction. Thus $K \subseteq B_{\epsilon - \lambda}$, as claimed. 

It follows that $K$ is approachable only if $B_{\epsilon - \lambda}$ is approachable. We now show that for every $\lambda > 0$, $B_{\epsilon - \lambda}$ is not approachable.
As in the proof of Proposition~\ref{prop:satisfiable-red-blackwell}, the $i$-th component of the reward vector is given by $u_i(M_i - v)$. Suppose $v = 1/m$. Then for every $M_i$, $\abs{M_i - v} \geq 1/2m = \epsilon$, by definition of $m$. Thus $\abs{u_i (M_i - v)} \geq u_i \epsilon$, and $\sum_{i=1}^m\abs{u_i (M_i - v)}  \geq \sum u_i\epsilon = \epsilon$. Equivalently, $\Exp{}{r(p, y)} \notin B_{\epsilon - \lambda}$. By Theorem~\ref{thm:response-approachability}, $B_{\epsilon - \lambda}$ is not approachable. 
\end{proof}
In order to describe the second condition required by Theorem~\ref{thm:slow-approachabilty} and show that it holds for $B_\epsilon$, we need additional technical setup. The following subsection serves this purpose. 

\subsection{Reducing approachability to scalar-valued games}
The vector-valued approachability game induces a number of scalar-valued min-max games, one for each direction in $\Real^m$. The value of these scalar games is closely connected to the question of approachability. 

Consider the approachability of $B_\epsilon$ with respect to individual directions, represented by arbitrary vectors $q \in \Real^m$ (for intuition, one may equivalently think of $q$ being direction vectors, those with $\ell_2$-norm equal to one, but this restriction is technically unnecessary; we stick to $q \in \Real^m$). Let $c \in B_\epsilon$ be such that $q$ belongs to the \emph{normal cone} of $B_\epsilon$ at $c$, that is, $\inner{c, q} = \sup_{c' \in B_\epsilon} \inner{c', q} = \epsilon \infnorm{q}$. We call such a pair $(c, q)$ as \emph{admissible}. 
Consider the following one-shot min-max game defined for every admissible $(c, q)$: %
\vspace{-0.2cm}
\begin{align*}
\val(c, q) %
&= \min_{u \in \Delta(\Xcal)} \max_{v \in [0,1]} \inner{\Exp{p \sim u, y \sim v}{r(p, y)} - c, q}
\\ &= \min_{u \in \Delta(\Xcal)}\max_{v \in [0,1]}\roundbrack{ \sum_{i = 1}^m u_i q_i((1-v)M_i + v(M_i - 1)) - \inner{c, q}}
\\ &= \min_{u \in \Delta(\Xcal)}\max_{v \in [0,1]}\roundbrack{ \sum_{i = 1}^m u_i q_i(M_i -v) - \epsilon\infnorm{q}}.
\end{align*}
To appreciate the relationship between the $\val(c, q)$ games and the $B_\epsilon$-approachability game, consider the following. Suppose the forecaster can guarantee \emph{one-shot approachability}, that is, there exists a fixed $u^\star \in \Delta(\Xcal)$ such that for every $v \in [0,1]$, $\Exp{p \sim u^{\star}, y\sim v}{r(p, y)} \in B_\epsilon$. By definition of the normal cone, for every admissible $(c, q)$, and any $c' \in B_\epsilon$, it holds that $\inner{c' - c, q} \leq 0$. In particular, $\Exp{p \sim u^\star, y \sim v}{r(p, y)} \in B_\epsilon$ is such a $c'$. It follows that for every admissible $(c, q)$, $\text{Val(c, q)} \leq 0$.

This observation does not hold in the reverse direction: even if $\val(c, q) \leq 0$ for every admissible $(c, q)$, one-shot approachability need not hold. The intuition is that the optimal $u$ for different $(c, q)$ can be different, and it is unclear how to merge them to achieve one-shot forecasting for the approachability game. In a remarkable result, \citet{blackwell1956analog} showed that the result does hold in the reverse direction for the \emph{repeated} approachability game (as opposed to the one-shot approachability game). 

\begin{theorem}[Theorem 1 by \citet{blackwell1956analog}]
A convex set $K$ is approachable if and only if for every admissible $(c, q)$, %
$\val(c, q) \leq 0$. 
\end{theorem}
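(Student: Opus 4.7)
I would prove both directions of the iff separately, using Sion's minimax theorem for the ``only if'' direction and Blackwell's projection-based strategy for the ``if'' direction.

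\textbf{Forward direction.} Suppose $K$ is approachable but some admissible pair $(c, q)$ satisfies $\val(c, q) \geq 2\delta > 0$. Because the payoff $(u, v) \mapsto \inner{\Exp{p \sim u, y \sim v}{r(p, y)} - c, q}$ is bilinear and continuous on the compact convex sets $\Delta(\Xcal)$ and $[0, 1]$, Sion's minimax theorem lets me swap the min and the max. So there exists a single $v^\star \in [0, 1]$ such that every forecaster distribution $u$ incurs expected payoff $\geq 2\delta$. Now let nature play $y_t \sim v^\star$ independently at every round; the increments $\inner{r(p_t, y_t) - c, q}$ are uniformly bounded with conditional mean $\geq 2\delta$, so Azuma--Hoeffding gives $\inner{\bar a^T - c, q} \geq \delta$ with high probability. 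Since admissibility means $q$ lies in the normal cone of $K$ at $c$, we have $K \subseteq \{x : \inner{x - c, q} \leq 0\}$, whence $d_K(\bar a^T) \geq \delta/\twonorm{q}$ with high probability --- contradicting $\lim_T \Exp{}{d_K(\bar a^T)} = 0$.

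\textbf{Reverse direction (strategy).} Define the forecaster's play at round $t + 1$ by projecting the running average onto $K$: let $c_{t+1}$ be the projection of $\bar a^t$ onto $K$ and set $q_{t+1} := \bar a^t - c_{t+1}$ (with $q_{t+1} = 0$ if $\bar a^t \in K$). The projection theorem gives $\inner{\bar a^t - c_{t+1}, c - c_{t+1}} \leq 0$ for every $c \in K$, so $q_{t+1}$ lies in the normal cone of $K$ at $c_{t+1}$, and hence $(c_{t+1}, q_{t+1})$ is admissible. Pick $u_{t+1}$ attaining the minimum in $\val(c_{t+1}, q_{t+1})$. By the hypothesis $\val(c_{t+1}, q_{t+1}) \leq 0$, for any play of nature,
\[
\Exp{}{\inner{r(p_{t+1}, y_{t+1}) - c_{t+1}, q_{t+1}} \mid \mathcal{F}_t} \leq 0.
\]

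\textbf{Reverse direction (rate).} Expanding $\bar a^T - c_T = \tfrac{T-1}{T}(\bar a^{T-1} - c_T) + \tfrac{1}{T}(r(p_T, y_T) - c_T)$ and using the crucial identity $\bar a^{T-1} - c_T = q_T$, I obtain
\[
\twonorm{\bar a^T - c_T}^2 = \tfrac{(T-1)^2}{T^2} d_K(\bar a^{T-1})^2 + \tfrac{2(T-1)}{T^2} \inner{q_T, r(p_T, y_T) - c_T} + \tfrac{1}{T^2} \twonorm{r(p_T, y_T) - c_T}^2.
\]
Taking conditional expectation, the cross term is non-positive by the displayed inequality, the last term is bounded by $C/T^2$ since the rewards are uniformly bounded, and $d_K(\bar a^T)^2 \leq \twonorm{\bar a^T - c_T}^2$ because $c_T \in K$. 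Multiplying by $T^2$ and telescoping yields $T^2 \Exp{}{d_K(\bar a^T)^2} \leq CT$, and Jensen's inequality then gives $\Exp{}{d_K(\bar a^T)} = O(1/\sqrt T) \to 0$.

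\textbf{Main obstacle.} The geometrically clever step is the choice of the projection-based strategy together with the recognition that $\bar a^{T-1} - c_T$ equals the very direction $q_T$ that appears in $\val$, which is what makes the cross term vanish in expectation. Technically, the most delicate point is invoking Sion's minimax in the forward direction: one must verify that the payoff is bilinear-continuous on compact convex domains so that min-max equals max-min and a pure strategy for nature exists. Everything else is routine bookkeeping once these two ingredients are in place.
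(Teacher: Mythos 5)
Your proof is correct: the forward direction via the minimax theorem and an oblivious i.i.d.\ strategy for nature, and the reverse direction via the projection-based strategy with the telescoping recursion $T^2\,\Exp{}{d_K(\bar a^T)^2} \leq (T-1)^2\,\Exp{}{d_K(\bar a^{T-1})^2} + C$, together constitute the classical argument of \citet{blackwell1956analog}. The paper itself states this theorem as a citation without reproducing a proof, so there is nothing to compare against beyond noting that your argument is essentially the standard one from the original source (and even yields the $O(1/\sqrt{T})$ approachability rate as a byproduct).
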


This condition has also been termed as halfspace-satisfiability by \citet{abernethy2011blackwell}. The result was stated in the language of convex cones by \citet{mannor2013approachability}. Notice that for our problem, the min-max game does not depend on $c$, once we replace $\inner{c, q}$ with $\epsilon \infnorm{q}$. We thus simplify notation and index our games only by $q \in \Real^m$:
\begin{equation}
    \val(q) := \min_{u \in \Delta(\Xcal)}\max_{v \in [0,1]}\roundbrack{ \sum_{i = 1}^m u_i q_i(M_i -v) - \epsilon\infnorm{q}}. \label{eq:mixed-game}
\end{equation}
We know that $B_\epsilon$ is approachable (Proposition~\ref{prop:satisfiable-red-blackwell}) and hence by Blackwell's result, halfspace-satisfiability must hold. That is, for every $q \in \Real^m$, $\val(q) \leq 0$. For completeness, we verify this in Proposition~\ref{prop:halfspace-satisfiability} (Appendix~\ref{appsec:supplementary}).  

Having defined the $\val(q)$ games, we are now ready to define mixed approachability. 

\subsection{Pure$^\star$ game and mixed approachability}
In order to achieve a small value in the $\val(q)$ game, the forecaster may play a randomized strategy, that is, $u^\star \neq e_i$, where $e_i$ is one of the canonical basis vectors of $\Real^m$. On the other hand, since nature goes second, she has no incentive to randomize: there will exist an optimal strategy $v^\star \in \{0, 1\}$. In the following `pure' game, the forecaster is also not allowed to randomize over his actions. 
\begin{align}
\valp(q) := \min_{p \in \Xcal} \max_{y \in \{0, 1\}} \inner{r(p, y), q} - \epsilon \infnorm{q}. \label{eq:pure-game}
\end{align}
The superscript `p' in $\valp(\cdot)$ refers to the min-max game being over pure actions $p \in \Xcal$ and $y \in \{0,1\}$. Let us refer to this as the pure game, and the game \eqref{eq:mixed-game} as the mixed game. 

Suppose the approaching player (forecaster in our case) can ensure halfspace-satisfiability using only pure actions: $\forall q,\ \valp(q) \leq 0$. \citet{mannor2013approachability} showed that if this is true then then the approaching player can ensure approachability at a fast rate of $O(1/n)$. However, it is possible to achieve the fast rate even if the above condition is not true. %
Characterizing a situation where the fast rate is unachievable requires considering another game, whose value lies between the pure and mixed games. Define
\begin{align*}
\Xcal^\star &= \{p \in \Xcal : p \in \text{support}(u^\star) \text{, where $u^\star$ is some optimal mixed strategy for the forecaster}\};\\
\Ycal^\star &= \{y \in \{0, 1\} : y \in \text{support}(v^\star) \text{, where $v^\star$ is some optimal mixed strategy for nature}\}.
\end{align*}
Then define the pure$^\star$ game and its value as follows, 
\begin{equation}
    \vals(q) := \min_{p \in \Xcal^\star} \max_{y \in \Ycal^\star} \inner{r(p, y), q} - \epsilon \infnorm{q}. \label{eq:star-game}
\end{equation}
It can be shown that for any $q$, $\val(q) \leq \vals(q) \leq \valp(q)$ \citep{mannor2013approachability}. %
We now define mixed approachability. 
\begin{definition}
An approachable set is said to be mixed approachable if there exists a $q \in \Real^m$ such that while $\val(q) = 0$, $\vals(q) > 0$. 
\end{definition}
The following lemma witnesses a $q$ that shows that the mixed approachability condition required by Theorem~\ref{thm:slow-approachabilty} is satisfied. The witnessed $q$ in the proof is identified based on case (d) in the proof of Proposition~\ref{prop:halfspace-satisfiability}.
\begin{lemma}
\label{prop:mixed-approachability}
    There exists a $q \in \Real^m$ such that $\vals(q) > \val(q) = 0$. Thus $B_\epsilon$ is mixed approachable. 
\end{lemma}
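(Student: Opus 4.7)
The plan is to exhibit an explicit vector $q$ that separates $\vals$ from $\val$. Guided by the $m=2$ case (where $q = (-1, 1)$ yields $\vals = 1/2 > 0 = \val$) and inspired by case (d) of Proposition~\ref{prop:halfspace-satisfiability}, I would take
\[q = (-1, 1, 1, \ldots, 1) \in \Real^m,\quad \text{so }\infnorm{q} = 1\text{ and }\epsilon\infnorm{q} = \epsilon.\]
Writing $A(u) := \sum_i u_i q_i M_i$ and $B(u) := A(u) - \sum_i u_i q_i$ for the values of the inner objective at $v=0$ and $v=1$ respectively, linearity in $v$ gives $\max_{v \in [0,1]} \sum_i u_i q_i(M_i - v) = \max(A(u), B(u))$, so proving $\val(q) = 0$ reduces to showing $\min_u \max(A, B) = \epsilon$.

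The first step is to verify that $u^\star := (1/2, 1/2, 0, \ldots, 0)$ saturates the bound: using $M_1 = \epsilon$ and $M_2 = 3\epsilon$, a direct computation gives $A(u^\star) = B(u^\star) = \epsilon$. The second step, which I expect to be the technical crux, is proving that $u^\star$ is the unique minimizer (so in particular $\val(q) = 0$ and $\Xcal^\star = \{M_1, M_2\}$). The key identity is $\sum_i u_i q_i = 1 - 2u_1$, whose sign dictates whether $A$ or $B$ is the maximum. I would split into three cases: (i) $u_1 < 1/2$, where $\max = A$ and placing leftover mass optimally on $M_2$ (the smallest $M_i$ with $i \ge 2$) yields $A \ge 3\epsilon - 4u_1 \epsilon > \epsilon$; (ii) $u_1 > 1/2$, handled symmetrically via $B$; (iii) $u_1 = 1/2$, where $A = B = -\epsilon/2 + \sum_{i\ge 2} u_i M_i$ equals $\epsilon$ iff all remaining mass sits on $M_2$, using the strict ordering $M_2 < M_3 < \cdots < M_m$.

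The third step is a short computation. Since $\sum_i u^\star_i q_i = 0$, the inner objective is constant in $v$ at $u^\star$, so every $v \in [0,1]$ is an optimal mixed strategy for nature; choosing any $v \in (0,1)$ witnesses $\Ycal^\star = \{0, 1\}$. Then $\max_{y \in \{0,1\}} q_1(M_1 - y) = 1 - M_1$ and $\max_{y \in \{0,1\}} q_2(M_2 - y) = M_2$, so
\[\vals(q) = \min(1 - M_1,\, M_2) - \epsilon = M_2 - \epsilon = 2\epsilon > 0,\]
where $M_2 \le 1 - M_1$ holds for every $m \ge 2$. Combined with $\val(q) = 0$, this establishes mixed approachability of $B_\epsilon$.
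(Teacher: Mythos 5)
Your proof is correct and follows essentially the same route as the paper's: the paper picks the mirror-image witness $q = (-1,\ldots,-1,1)$, isolating $\Xcal^\star = \{M_{m-1}, M_m\}$ rather than $\{M_1, M_2\}$, and the computation of $\val(q)=0$ via a two-point optimal $u$ and of $\vals(q)>0$ is otherwise identical. Your explicit uniqueness argument for the minimizer $u^\star$ (needed to conclude $\Xcal^\star = \{M_1,M_2\}$, since a larger $\Xcal^\star$ could only decrease $\vals(q)$) is if anything slightly more careful than the paper's.
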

\begin{proof}
Set $q_{1:m-1} = -1$ (i.e. $q_i = -1$ for all $i \in [m-1]$) and $q_m = 1$. Let us compute $\val(q)$. The game for nature reduces to maximizing $(\sum_{i=1}^{m-1}u_i - u_m)v$ which can be done by playing $v = \indicator{u_m \leq \sum_{i=1}^{m-1}u_i}$. We perform case work to identify the optimal play for the forecaster. 
\begin{itemize}[leftmargin=5mm]
    \item If $\sum_{i=1}^{m-1} u_i \leq u_m$, $v(u_m - \sum_{i=1}^{m-1}u_i) = 0$. The objective for the forecaster reduces to:
\[
\min_{u \in \Delta(\Xcal), \sum_{i=1}^{m-1} u_i \leq u_m} u_mM_m - \sum_{i=1}^{m-1}u_iM_i - \epsilon.
\]
Note that $0 < M_1 < M_2 < \ldots < M_m$. Thus the forecaster would set the minimum possible value of $u_m$, which under the constraints is equal to $0.5$. For the second term, the largest coefficient of a $u_i$ in the summation is $M_{m-1}$. Thus in order to minimize, the forecaster would set the maximum possible value of $u_{m-1}$, which under the constraints is $1 - u_m = 0.5$. We conclude that the minimum occurs at $u_m = u_{m-1} = 0.5$, $u_{i \notin\{m-1,m\}} = 0$. 
The objective value is equal to $0.5(M_m - M_{m-1}) - \epsilon = 0$.
\item If $\sum_{i=1}^{m-1} u_i \geq u_m$, $v=1$ is an optimal play for nature. The objective for the forecaster reduces to: 
\begin{align*}
\min_{u \in \Delta(\Xcal), \sum_{i=1}^{m-1} u_i \geq u_m} & u_m(M_m-1) - \sum_{i=1}^{m-1}u_i(M_i-1) - \epsilon .%
\end{align*}
As in the other case, this game is solved by observing that since $M_1 < M_2 < \ldots < M_m < 1$, the forecaster would want to put the maximum possible value on $u_m$ which is $0.5$ under the constraints. Among $u_{1:m-1}$, the multiplier of $(M_{m-1}-1)$ hurts the least. Thus the forecaster sets $u_{m-1} = 0.5$ and $u_{i \notin \{m-1, m\}}$. The objective value at the minimum is equal to $0$. 
\end{itemize}
In each case, the forecaster's optimal play is $u_m = u_{m-1} = 0.5$, $u_{i \notin\{m-1,m\}} = 0$. This essentially makes nature's action irrelevant; nature's optimal response is any $v \in [0,1]$. Thus we have shown that $\val(q) = 0$, $\Xcal^\star = \{M_{m-1}, M_m\}$, and $\Ycal^\star = \{0, 1\}$. Let us now compute 
\[
 \vals(q) = \min_{p \in \Xcal^\star} \max_{y \in \Ycal^\star}\roundbrack{ \indicator{p = M_m}(M_m-y) - \indicator{p = M_{m-1}}(M_{m-1}-y) - \epsilon}.
\] 
If the forecast is $p = M_m$, nature can respond $y = 0$ to achieve an overall objective of $1-2\epsilon > 0$. (We have assumed $m \geq 2$ so that $\epsilon < 0.5$.) If the forecast is $p = M_{m=1}$, nature can respond $y = 1$ to achieve an overall objective of $2\epsilon > 0$. Thus, $\vals(q) > 0$. 

\end{proof}

\subsection{Relationship to previous lower bounds for calibration}
\label{sec:lower-bound-previous}
\citet{qiao2021stronger} recently constructed a strategy of nature that ensures that the calibration error of any forecaster playing Calibration-Game-I satisfies 
    $\Exp{}{\text{CE}_T}
    = \Omega(T^{-0.472}).$ %
This bound is interesting on its own and neither weaker nor stronger than the bound we show in Theorem~\ref{thm:slow-calibration}. By studying $\Exp{}{\epsilon\text{-CE}_T}$, we allow the forecaster a slack of $\epsilon$ in his calibration error, which is standard in several earlier cited works (see Section~\ref{sec:main-lit-review}), and may be sufficient in practice given that the forecasts are themselves on a $2\epsilon$-grid. 

\citet{qiao2021stronger} also noted that $\Exp{}{\text{CE}_T} = \Omega(T^{-0.5})$ can be forced using a \emph{Bernoulli strategy}: at each time step, nature plays $y_t \sim \text{Bernoulli}(p)$ for some fixed $p \in [0, 1]$ unknown to the forecaster. However, in Appendix~\ref{appsec:bernoulli-strategy}, we provide initial (but not conclusive) evidence that the Bernoulli strategy seems insufficient to guarantee $\Exp{}{\epsilon\text{-CE}_T} = \Omega(T^{-0.5})$. We construct an $\epsilon$-calibrated forecaster that satisfies $\Exp{}{\text{CE}_T - \epsilon} \leq O(\text{poly}(\log T)/T)$ for the Bernoulli strategy ($\text{poly}(\log T)$  corresponds to polynomial terms in $\log(T)$). We conjecture that the stronger statement $\Exp{}{\epsilon\text{-CE}_T} = \Exp{}{\max(\text{CE}_T-\epsilon,0)} \leq O(\text{poly}(\log T)/T)$ also holds, which would mean that the Bernoulli strategy is insufficient to derive a $\Omega(1/\sqrt{T})$ bound on the $\epsilon$-calibration rate (as shown by Theorem~\ref{thm:slow-calibration}).

\vspace{-4mm}
\section{Summary}
\label{sec:discussion}
This paper connects three rich areas of the literature in a natural way: online calibration, the power of two choices, and imprecise probability. In summary, we show that by allowing the forecaster to output a deterministic short interval of probabilities (of length at most $2\epsilon$), we can achieve a faster rate of $O(1/T)$ for $\epsilon$-calibration against a fully adaptive adversarial reality who presents the binary outcome after observing the interval forecast. This should be compared to the $\Theta(1/\sqrt T)$ rate achievable with randomized point forecasts (the upper bound is a seminal result by~\cite{foster1998asymptotic}, the lower bound is ours), or the $\Theta(1)$ for deterministic point forecasts. 

Arguably, such narrow intervals are as practically interpretable as point forecasts, and since some sort of binning anyway underlies most calibration algorithms, it also feels natural to allow the forecaster to express their uncertainty in this fashion, especially since it avoids randomization and improves calibration. Thus, we view our work as a theoretical contribution with clear practical implications.

Several open questions remain, since we open a rather new line of investigation. We mention two: (a) lower bounds for our setting are unknown, and (b) we don't know if models providing more than two choices could possibly improve the rate further. We suspect that $1/T$ is the optimal rate since it corresponds to constant cumulative calibration error  (without normalization by $T$), which is incurred at $t = 1$ itself and seems unavoidable. %
Finally, it would be interesting to (c) figure out multidimensional analogs of our paper. We leave these for future work.%

We also note some POTC-style results in online learning. \citet{neu2020fast} show that for expert-based classification, providing the learner the choice to abstain from making a prediction improves the regret from $\Omega(1/\sqrt{T})$ to $O(1/T)$, similar to what we obtain in Theorem~\ref{thm:potc-calibration-fast}. In zero-order convex optimization or bandit convex optimization, allowing the learner two function evaluations enables the unknown gradient to be estimated using finite difference, leading to significantly improved rates (\citet{agarwal2010optimal} and follow-up work). For example, in the strongly convex case the rate improves from $\Omega(\sqrt{T})$ to $O(\log T)$. Such improvements also hold for non-smooth functions \citep{shamir2017optimal}. It would be interesting to consider POTC setups for multi-armed bandits (two arm-pulls instead of one) or expert-based online learning (choosing two experts instead of randomizing or choosing one expert).

\section*{Acknowledgements}
CG would like to thank the Bloomberg Data Science Ph.D. Fellowship for their generous support. We are grateful to Yusha Liu, Mingda Qiao, and the anonymous COLT reviewers, for feedback that helped improve a previous version of the paper. 

\bibliographystyle{plainnat}
\bibliography{references}

\appendix

\section{Supplementary results and deferred proofs}
\label{appsec:supplementary}

\subsection{Lemma~\ref{lemma:cond-a-cond-b} with proof}
\begin{lemma}
\label{lemma:cond-a-cond-b}
In POTC-Cal, for any $t \geq 1$, if condition A is not satisfied, then condition B must be satisfied. 
\end{lemma}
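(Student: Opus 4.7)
The plan is to argue by unpacking the excess/deficit language into the location of each $p_i^t$ relative to its interval $I_i = [l_i, r_i]$, and then to run a simple boundary argument on the indices $i=1$ and $i=m$ combined with a discrete intermediate value step.

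First I would handle the trivial case that ensures all $p_i^t$ are genuine averages. If there exists an index $i$ with $N_i^t = 0$, then by definition $p_i^t = M_i \in (l_i, r_i)$, which gives $d_i^t < 0$ and $e_i^t < 0$, so condition A holds. Hence, assuming condition A fails, we may assume $N_i^t > 0$ for every $i$, and therefore $p_i^t \in [0,1]$. Moreover, since condition A fails, for each $i \in [m]$ we have $p_i^t \notin [l_i, r_i]$, so exactly one of two mutually exclusive cases holds: $p_i^t > r_i$ (excess, i.e.\ $e_i^t > 0$) or $p_i^t < l_i$ (deficit, i.e.\ $d_i^t > 0$).

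Next I would pin down the two boundary indices. For $i = 1$ we have $l_1 = 0$, and $p_1^t \geq 0$ rules out deficit, forcing $e_1^t > 0$. For $i = m$ we have $r_m = 1$, and $p_m^t \leq 1$ rules out excess, forcing $d_m^t > 0$. So index $1$ is in excess and index $m$ is in deficit.

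Finally, I would apply a discrete intermediate value argument along the chain $1, 2, \ldots, m$. Since each index is either in excess or in deficit, the transition from ``index $1$ is in excess'' to ``index $m$ is in deficit'' must happen at some consecutive pair: there exists $i \in [m-1]$ with $e_i^t > 0$ and $d_{i+1}^t > 0$, which is exactly condition B. Using $l_{i+1} = r_i$ makes it transparent that the reported interval $[M_i, M_{i+1}]$ straddles the shared boundary between $I_i$ and $I_{i+1}$, but this geometric remark is not needed for the logical step. The only substantive point is ruling out $N_i^t = 0$ and handling the two boundary indices; everything else is bookkeeping, so I do not expect a genuine obstacle here.
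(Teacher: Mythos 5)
Your proof is correct and takes essentially the same approach as the paper: both anchor the argument at the boundary indices $1$ and $m$ and then run a discrete intermediate-value step along the chain to find a consecutive excess/deficit pair. The only difference is cosmetic — your explicit case split on $N_i^t = 0$ is harmless but unnecessary, since $d_1^t \leq 0$ and $e_m^t \leq 0$ hold unconditionally from $p_i^t \in [0,1]$ and $l_1 = 0$, $r_m = 1$.
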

\begin{proof}
Note that for all $t$, $d_1^t \leq 0$ and $e_m^t \leq 0$, since $l_1 = 0$ and $r_m = 1$ (there cannot be a deficit for interval $1$ or an excess for interval $m$). If condition A does not hold for $i = m$, $d_m^t > 0$. Since $d_1^t \leq 0$ and $d_m^t > 0$, there exists an $i \in [m-1]$ such that, $d_i^t \leq 0$ and $d_{i+1}^t > 0$. If condition A does not hold, then $d_i^t \leq 0$ implies $e_i^t > 0$. Thus we have that $d^t_{i+1} > 0$ and $e^t_i > 0$; in other words, condition B holds at the exhibited $i$.
\end{proof}

\subsection{Proposition~\ref{prop:halfspace-satisfiability} with proof}
\begin{proposition}
\label{prop:halfspace-satisfiability}
    The forecaster in Calibration-Game-I can ensure halfspace-satisfiability. That is, for every $q \in \Real^m$, $\val(q) \leq 0$.
\end{proposition}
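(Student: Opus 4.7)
\begin{proof}[Proof sketch]
The plan is to reduce halfspace-satisfiability to the response-satisfiability already established in Proposition~\ref{prop:satisfiable-red-blackwell} via a minimax swap. The objective
\[
F(u, v) := \sum_{i=1}^m u_i q_i (M_i - v) - \epsilon \infnorm{q}
\]
is bilinear (hence convex in $u$ and concave in $v$), and both $\Delta(\Xcal)$ and $[0,1]$ are compact convex sets, so Sion's minimax theorem gives
\[
\val(q) = \min_{u \in \Delta(\Xcal)} \max_{v \in [0,1]} F(u,v) = \max_{v \in [0,1]} \min_{u \in \Delta(\Xcal)} F(u,v).
\]
So it suffices to show that for every fixed $v \in [0,1]$, there exists $u \in \Delta(\Xcal)$ with $F(u,v) \leq 0$.

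Fix $v \in [0,1]$. By Proposition~\ref{prop:satisfiable-red-blackwell} (response-satisfiability), there exists $u^v \in \Delta(\Xcal)$ such that $\Exp{p \sim u^v, y \sim v}{r(p,y)} \in B_\epsilon$. Writing out the reward vector component-wise, this says
\[
\sum_{i=1}^m \abs{u^v_i (M_i - v)} \leq \epsilon.
\]
Applying H\"older's inequality yields
\[
\sum_{i=1}^m u^v_i q_i (M_i - v) \leq \sum_{i=1}^m \abs{u^v_i(M_i - v)} \cdot \infnorm{q} \leq \epsilon \infnorm{q},
\]
so $F(u^v, v) \leq 0$. Taking the max over $v$ and invoking the minimax identity above gives $\val(q) \leq 0$, as required.
\end{proof}

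The only possible obstacle is verifying the hypotheses of Sion's minimax theorem, but this is immediate from bilinearity plus compactness. Everything else is a one-line application of response-satisfiability together with the elementary inequality $\langle x, q\rangle \leq \onenorm{x}\infnorm{q}$, specialized to the vector $x_i = u^v_i(M_i - v)$ whose $\ell_1$-norm is exactly the quantity controlled by membership of the expected reward in $B_\epsilon$.
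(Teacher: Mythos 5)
Your proof is correct, but it takes a genuinely different route from the paper. The paper proves the proposition by an explicit four-way case analysis on the sign pattern of $q$, in each case exhibiting a concrete optimal $u$ (a point mass on $M_1$, $M_m$, or some $M_i$ with $q_i=0$, or in the remaining case a two-point mixture on consecutive indices $j, j+1$ with $q_j<0<q_{j+1}$ chosen so that nature's move $v$ is neutralized). You instead give the abstract argument: swap min and max via the minimax theorem (valid here, since the payoff is bilinear and both $\Delta(\Xcal)$ and $[0,1]$ are compact and convex -- indeed this is just a finite matrix game in mixed strategies), then for each fixed $v$ invoke response-satisfiability (Proposition~\ref{prop:satisfiable-red-blackwell}) to get $u^v$ with $\sum_i \abs{u^v_i(M_i-v)}\le\epsilon$, and finish with $\inner{x,q}\le\onenorm{x}\infnorm{q}$. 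This is essentially a self-contained proof of the general implication ``response-satisfiability $\Rightarrow$ halfspace-satisfiability'' specialized to the $\ell_1$-ball, whose support function is $\epsilon\infnorm{q}$; the paper itself remarks that the proposition follows from Blackwell's theorem and that the direct verification is only ``for completeness.'' What your route buys is brevity and generality (it would work verbatim for any convex target set, replacing $\epsilon\infnorm{q}$ by the support function). What it loses is the constructive content: the paper's case (d) identifies the explicit optimal mixed strategy $u^\star$ supported on $\{M_j, M_{j+1}\}$, and this structure is reused later to witness the direction $q$ in the proof of mixed approachability (Lemma~\ref{prop:mixed-approachability}). So your proof suffices for the proposition as stated, but would leave the later argument without its pointer to ``case (d).''
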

\begin{proof}
    Our construction is directly inspired by the proof of calibration by \citet{foster1999proof}. We perform a case analysis for different values of $q$:
    \begin{enumerate}[label=(\alph*)]
        \item 
    Suppose any $q_i = 0$. Then, playing $u_i = 1$ and $u_{j \neq i} = 0$ gives the objective value of $-\epsilon\infnorm{q} \leq 0$ irrespective of the value of $v$. %
    \item 
    Suppose $q_1 > 0$. Then, playing $u_1 = 1$ and $u_{j > 1} = 0$ gives the objective value as $q_1(M_1 - v) - \epsilon \infnorm{q} \leq q_1\epsilon -  \epsilon \infnorm{q} \leq 0$ (note that $M_1 = \epsilon$ and $v \geq 0$). 
    \item 
    Suppose $q_m < 0$. Then, playing $u_m = 1$ and $u_{j < m} = 0$ gives the objective value as $q_m(M_m - v) - \epsilon \infnorm{q} \leq \abs{q_m}\epsilon -  \epsilon \infnorm{q} \leq 0$ (note that $M_m = 1-\epsilon$ and $v \leq 1$). 
    \item 
    Suppose that neither of cases (a), (b), or (c) hold. Namely, $q_1 < 0$, $q_m > 0$ and $q_i \neq 0$ for any $i$. Let $j \in [m-1]$ be the smallest index such that $q_j < 0$ and $q_{j+1} > 0$. Then consider $u$ given by 
    \[
    u_j = \frac{\abs{q_{j+1}}}{\abs{q_{j}} + \abs{q_{j+1}}},\  u_{j+1} = \frac{\abs{q_j}}{\abs{q_{j}} + \abs{q_{j+1}}},\ u_{i \notin \{j, j+1\}} = 0.
    \]
    Observe two facts. First, $\sum_{i_1}^m u_i q_i v= (u_jq_j + u_{j+1} q_{j+1})v = 0$ since the value inside the brackets is itself equal to $0$ (any play $v$ of nature is thus rendered ineffective). Second, 
    \begin{align*}
        \sum_{i_1}^m u_i q_i M_i &= u_jq_jM_j + u_{j+1} q_{j+1}M_{j+1}
                                \\&= (u_jq_j + u_{j+1} q_{j+1})M_j +u_{j+1} q_{j+1}(M_{j+1} - M_j)
                                \\&= 0 + 2u_{j+1}q_{j+1}\epsilon
                                \\&= \frac{2\abs{q_j}\abs{q_{j+1}} \epsilon}{\abs{q_{j}} + \abs{q_{j+1}}}
                                \\&\leq \frac{\infnorm{q}\abs{q_{j}} \epsilon}{\abs{q_{j}} + \abs{q_{j+1}}} + \frac{\infnorm{q}\abs{q_{j+1}} \epsilon}{\abs{q_{j}} + \abs{q_{j+1}}}
                                \\&= \infnorm{q}\epsilon.
    \end{align*}
    Thus the overall objective value is at most $0$.
    \end{enumerate}
    The cases considered for $q$ are exhaustive, and in each case we verified that the forecaster can guarantee that the objective value is at most $0$. 
\end{proof}

\subsection{Proof of Proposition~\ref{prop:satisfiable-red-blackwell}}
\label{subsec:proof-prop:satisfiable-red-blackwell}

The $i$-th component of the expected reward vector \eqref{eq:response-satisfiability} is $u_i[(1-v)M_i + v(M_i - 1)] = u_i(M_i - v)$. Suppose $v \in I_j$. Consider playing $u \in \Delta(\Xcal)$ given by $u_j = 1$, $u_{i \neq j} = 0$. Then $\abs{\Exp{p \sim u, y \sim v}{ r(p, y)} } = \abs{\sum_{i=1}^m u_i(M_i - v)} = \abs{M_j - v}$. Since $v \in I_j$, $M_j$ is the mid-point of $I_j$, and the radius of each interval is $\epsilon$,  $\abs{M_j - v} \leq \epsilon$. Thus $\Exp{u, v}{r(p, y)} \in B_\epsilon$.

\section{Bernoulli strategy seems insufficient to prove an $\Omega(1/\sqrt{T})$ lower bound on the $\epsilon$-calibration rate}
\label{appsec:bernoulli-strategy}
This section is in the setup of Calibration-Game-I. Nature plays the `Bernoulli strategy': for a fixed $p\in[0,1]$ (unknown to the forecaster), nature plays $y_t \sim \text{Bernoulli}(p)$ each time. We describe a strategy for the forecaster that satisfies $\Exp{}{\epsilon\text{-CE}_T} = O(1/\sqrt{T})$ against a general strategy of nature; however, if nature is playing the Bernoulli strategy (instead of an arbitrary strategy), then our proposed strategy satisfies $\Exp{}{\text{CE}_T - \epsilon} \leq  O(\text{poly}(\log T)/T)$. Our result stops short of proving the following stronger statement against the Bernoulli strategy: $\Exp{}{\epsilon\text{-CE}_T} = \Exp{}{\max(\text{CE}_T - \epsilon, 0)} \leq O(\text{poly}(\log T)/T)$. We conjecture that this stronger statement holds as well, meaning that the Bernoulli strategy is insufficient to derive a $\Omega(1/\sqrt{T})$ bound on the $\epsilon$-calibration rate as shown in Theorem~\ref{thm:slow-calibration}.

\begin{figure}[t]
    \centering
\fbox{%
    \parbox{0.95\textwidth}{
        \begin{center} \underline{\texttt{PI-F99}: pre-initialized version of the $\epsilon$-calibrated strategy by \citet{foster1999proof}}
         \end{center}
         Fix $T_0 \in \mathbb{N}$. Set $T_k := 2^k T_0$, $K_k$ as in \eqref{eq:K-k}, $T^{(0)} := 0$, and $T^{(k)} := \sum_{j = 1}^{k-1} T_j = (2^k-1)T_0$.

         \vspace{0.5cm}
         For each time $t = 1, 2, \ldots $
         \begin{itemize}
            \item Identify smallest $k \in \mathbb{N}$ such that $t \leq T^{(k)}$. 
             \item Play \texttt{PI-F99$(T_{k-1})$} based on observations from $T^{(k-1)}+1$ until $t$: %
        \begin{itemize}[leftmargin=5mm]
                \item 
            if $t \leq T^{(k-1)} + mK_k$ (initialization phase): \\
            \hspace*{1cm}identify $j$ such that $t - T^{(k-1)} \in ( (j-1)K_k + jK_k]$ and predict $M_j$.
            \item if $t > T^{(k-1)} + mK_k$:\\\hspace*{1cm} follow Foster's strategy based on observations from $T^{(k-1)}+1$ until $t$.
        \end{itemize}
    \end{itemize}
    }
}%
\end{figure}

\subsection{The strategy}

The strategy we propose is a pre-initialized version of the $\epsilon$-calibration strategy of \citet{foster1999proof}---we call it \texttt{PI-F99} (for pre-initialized-Foster-99). %
\texttt{PI-F99} relies on a few constants: a large enough \emph{doubling} horizon $T_0 \in \mathbb{N}$, \emph{doubled} versions of $T_0$, namely $T_k =: 2^k T_0$ for $k \in \mathbb{N}$, and an initialization parameter 
\begin{equation}
    K_k := \ceil{(0.85\log T_k/\epsilon)^2 (\log \log (T_k/2) +  0.72 \log(5.2mT_k^2))} \label{eq:K-k}
\end{equation}
defined for each $k$. $K_k$ is the sufficient number of samples required to estimate the bias of $m$ Bernoulli random variables simultaneously and uniformly across time to a certain degree of reliability; further details become clear when analyzing. The constants in the definition of $K_k$ are not crucial (looser constants still lead to the same asymptotic dependence on $T$), but we identify them nevertheless in order to be precise.

\texttt{PI-F99} is a concatenation of certain sub-strategies \texttt{PI-F99}$(T_k)$ for $k \in \mathbb{N}$, each of which are strategies for Calibration-Game-I assuming the game only goes on until time $t = T_k$. The forecaster playing \texttt{PI-F99}$(T_k)$ first forecasts $p_t = M_i$, $K_k$ times each, for each $i \in [m]$ (thus until time $t \leq m K_k$). This is the \emph{initialization} phase. Then, for $t \in \{mK_k+1, \ldots, T_k\}$, the forecaster follows Foster's strategy initialized with \emph{current empirical frequencies} for each bin, based on what has been observed so far in the initialization phase. 

The actual strategy of the forecaster corresponds to a concatenation of \texttt{PI-F99}$(T_0)$, \texttt{PI-F99}$(T_1)$, \texttt{PI-F99}$(T_2)$, and so on. This is a version of the doubling trick \citep{cesa2006prediction}. The forecaster first plays \texttt{PI-F99}$(T_0)$ from $t = 1$ to $t = T_0$, then plays \texttt{PI-F99}$(T_1)$ from $t = T_0 + 1$ to $t = T_0 + T_1$, then \texttt{PI-F99}$(T_2)$ and so on. To be clear, when switching from \texttt{PI-F99}$(T_{k-1})$ to \texttt{PI-F99}$(T_k)$, the forecaster completely ignores the forecasts and observations so far, and restarts. The overall strategy is described in the box on top of the previous page. %

\subsection{Analysis}
Ignoring terms in $m$ and $\epsilon$ which are constants in $T_k$, $mK_k = O(\log^2 T_k) \ll \sqrt{T_k}$ (for sufficiently large $T_0$ and all $k \geq 0$). Assuming a worst case error of $1$ for each time until $T \leq mK_k$, we can show that the $\epsilon$-calibration error of \texttt{PI-F99}$(T_k)$ at any time $T \leq T_k$ satisfies the following:
\begin{align}
&\Exp{}{\max\roundbrack{\sum_{i = 1}^m \abs{\frac{1}{T}\sum_{t = 1}^T \indicator{p_t = M_i} (M_i - y_t)} - \epsilon, 0}} \nonumber \\ \leq~&\Exp{}{\max\roundbrack{\sum_{i = 1}^m \abs{\frac{1}{T}\sum_{t = 1}^{mK_k} \indicator{p_t = M_i} (M_i - y_t)}+ \sum_{i = 1}^m \abs{\frac{1}{T}\sum_{t = mK_k + 1}^T \indicator{p_t = M_i} (M_i - y_t)} - \epsilon, 0}} \nonumber \\
\leq~&\frac{mK_k}{T} + \Exp{}{\max\roundbrack{\sum_{i = 1}^m \abs{\frac{1}{T}\sum_{t = mK_k + 1}^T \indicator{p_t = M_i} (M_i - y_t)} - \epsilon, 0}} \nonumber \\
\leq~&\frac{O(\sqrt{T_k})}{T} + \Exp{}{\max\roundbrack{\sum_{i = 1}^m \abs{\frac{1}{T}\sum_{t = mK_k + 1}^T \indicator{p_t = M_i} (M_i - y_t)} - \epsilon, 0}}. \nonumber 
\end{align}
To bound the second term, note that Foster's strategy has a calibration rate of $O(1/\sqrt{T})$ starting with any initialization. Thus,
\begin{align}
\Exp{}{\max\roundbrack{\sum_{i = 1}^m \abs{\frac{1}{T}\sum_{t = mK_k + 1}^T \indicator{p_t = M_i} (M_i - y_t)} - \epsilon, 0}} &\leq%
\frac{%
O(\sqrt{T - mK_k})}{T} \leq \frac{C\sqrt{T_k}}{T}, \label{eq:pif99-k}
\end{align}
for some constant $C$ independent of $T_0$ and $k$. %
Thus for \texttt{PI-F99}$(T_k)$ at any time $T \leq T_k$, we have shown that $\Exp{}{\epsilon\text{-CE}_T} \leq O(1/\sqrt{T})$. From this, it follows that the overall strategy \texttt{PI-F99} satisfies $f(T) = O(1/\sqrt{T})$ asymptotically. This is shown in Proposition~\ref{ref:pi-f99-guarantee}, later in this subsection. Next, we perform the analysis for the Bernoulli strategy.

\begin{proposition}
\label{prop:bernoulli-rate}
\texttt{PI-F99} satisfies $\Exp{}{\text{CE}_T} \leq \epsilon + O(\text{poly}(\log T)/T)$ if nature follows the Bernoulli strategy.
\end{proposition}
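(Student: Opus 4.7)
The plan is to leverage two features of the Bernoulli strategy: the $y_t$'s are i.i.d.\ Bernoulli$(p)$, so empirical averages of the $y$'s in any set of rounds concentrate around $p$; and \texttt{PI-F99} restarts from scratch at each doubling epoch, so it suffices to prove the per-epoch statement that the non-normalized calibration error within an epoch $j$ is at most $\epsilon T_j + O(\text{poly}(\log T_j))$ in expectation. Writing $A_i^T := \sum_{t=1}^T \indicator{p_t = M_i}(M_i - y_t)$ and splitting $A_i^T = \sum_j A_i^{(j)}$ across the at most $k = O(\log T)$ elapsed epochs, the triangle inequality gives $T \cdot \text{CE}_T = \sum_i \abs{A_i^T} \leq \sum_j \sum_i \abs{A_i^{(j)}}$, which after taking expectations and applying the per-epoch bound becomes $\sum_j (\epsilon T_j + O(\text{poly}(\log T_j))) \leq \epsilon T + O(\text{poly}(\log T))$; dividing by $T$ yields the claim.

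Within epoch $j$ the initialization phase (length $mK_j$) contributes at most $mK_j = O(\text{poly}(\log T_j))$ to the non-normalized error using $\abs{M_i - y_t} \leq 1$, so the nontrivial task is to bound the Foster phase. For the Foster phase I condition on the good event
\[
E_j := \{\forall i \in [m],\ \forall t \in (mK_j, T_j]:\ \abs{p_i^t - p} \leq \delta_j\},
\]
where $\delta_j$ is a small multiple of $\epsilon$ (say $\epsilon/4$). The choice of $K_j$ in~\eqref{eq:K-k} is made precisely so that Hoeffding's inequality combined with a time-uniform (Doob/LIL-style) union bound over $i \in [m]$ and $t \in (mK_j, T_j]$ gives $\Prob(E_j^c) \leq 1/T_j^2$; on $E_j^c$ the worst-case non-normalized error $T_j$ contributes $O(1/T_j)$ in expectation, which is negligible.

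On $E_j$, let $j^\star$ be the index with $p \in I_{j^\star}$. When $p$ lies in the interior of $I_{j^\star}$ (at distance $\geq \delta_j$ from $\partial I_{j^\star}$), $p_{j^\star}^t \in [l_{j^\star}, r_{j^\star}]$ for every $t \geq mK_j$, so Foster's strategy locks onto $M_{j^\star}$ (analogously to condition~A of POTC-Cal) and non-dominant bins collect only their $K_j$ initialization samples, contributing $O(mK_j)$ to the non-normalized error in total. The expected contribution of bin $j^\star$ is $\Exp{}{\abs{M_{j^\star} - p_{j^\star}^{T_j}}}$; writing $\mu := M_{j^\star} - p \in [-\epsilon, \epsilon]$ and $Y := p_{j^\star}^{T_j} - p$ (mean zero, sub-Gaussian with scale $\sigma = O(1/\sqrt{T_j})$), a standard ``expected absolute deviation from a biased target'' calculation yields $\Exp{}{\abs{\mu - Y}} \leq \abs{\mu} + O(\sigma \exp(-\abs{\mu}^2/(2\sigma^2)))$. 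When $\abs{\mu} \geq \epsilon/2$ this tail is smaller than any polynomial in $1/T_j$, so the bin's contribution is $\leq \epsilon + O(1/T_j)$; when $\abs{\mu} < \epsilon/2$ the plain triangle bound $\abs{\mu} + O(\sigma)$ stays strictly below $\epsilon$ for large $T_j$, so this bin's contribution to $\Exp{}{\text{CE}_{T_j} - \epsilon}$ is in fact negative. Assembling the dominant-bin contribution with the initialization, non-dominant, and bad-event contributions completes the per-epoch bound.

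The main obstacle I anticipate is the ``boundary case'' where $p$ lies within $\delta_j$ of $\partial I_{j^\star}$. In this regime Foster's strategy genuinely randomizes between $M_{j^\star}$ and an adjacent bin, so the ``locked on $M_{j^\star}$'' argument above must be replaced by a careful accounting of how the excess-deficit balance of Foster's randomization distributes samples across the two active bins. I expect the switching behavior to be controllable by a Freedman-style martingale concentration for the bin counts, after which the one-sided deviation bound above can be applied to each of the two active bins individually, yielding the desired per-epoch bound in this case as well.
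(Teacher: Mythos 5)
Your overall skeleton matches the paper's: a concentration event driven by the pre-initialization phase, a split between bins far from $p$ (which are played only $O(\text{poly}(\log T))$ times, via a counting argument) and the at most two bins within $\epsilon$ of $p$, and a case analysis on whether $p$ sits near a bin midpoint or near a bin boundary. Your epoch-by-epoch decomposition up front is a mostly cosmetic difference: the paper bounds the calibration error globally and buries the doubling structure inside Lemma~\ref{lemma:A_t}, which lower-bounds $\Exp{}{A^T}$, the number of rounds spent on near-$p$ bins. (Minor note: for the last, partial epoch your per-epoch bound must read $\epsilon(T - T^{(k-1)})$ rather than $\epsilon T_k$, but your lock-on argument does hold at intermediate times, so this is fixable.)

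The genuine gap is in your treatment of the near-$p$ bins, and it sits exactly where you flag ``the main obstacle.'' First, your quantity $Y = p_{j^\star}^{T_j} - p$ is a ratio with a random, adaptively chosen denominator $N_{j^\star}^{T_j}$; it is not mean-zero and not sub-Gaussian with a deterministic scale $O(1/\sqrt{T_j})$, so the ``expected absolute deviation from a biased target'' computation does not apply as stated. Second, in the boundary case (two active bins, each with $\abs{\mu_i}$ within $\delta_j$ of $\epsilon$), you defer to an unspecified Freedman-style control of how Foster's randomization splits mass between the two bins; no such control of the bin counts is actually needed, and it is not clear your route closes. The paper sidesteps both issues by working with the unnormalized sums $R = \frac{1}{T}\sum_{t}\indicator{p_t = M_j}(M_j - y_t)$: the tower property gives the exact identity $\Exp{}{R} = \Exp{}{N_j^T}(M_j - p)/T$ regardless of how the counts are distributed; the good event pins down the sign of $R$; and the elementary inequality $\Exp{}{\abs{R}} \leq \Exp{}{R} + 2\,\text{Pr}(R < 0)$ (Lemma~\ref{lemma:bdd-rv}) then yields $\Exp{}{\abs{R}} \leq \epsilon\,\Exp{}{N_j^T}/T + O(1/T)$. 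Summing over the at most two such bins and using $\Exp{}{A^T} \leq T$ finishes the argument. You would need to import this device (or an equivalent replacement for the ratio-statistic step) to make your proof go through.
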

\begin{proof}
Following the notation of Section~\ref{sec:calibration-potc}, let $N_i^T$ denote the number of times the mid-point $M_i$ is forecasted until time $T$. We have, %
\begin{align*}
    \Exp{}{\sum_{i = 1}^m \abs{\frac{1}{T}\sum_{t = 1}^T \indicator{p_t = M_i} (M_i - y_t)}} &=  \underbrace{\frac{\sum_{i \in [m] : \abs{p - M_i} > \epsilon}\Exp{}{ N_i^T\abs{M_i - \sum_{t=1}^T \indicator{p_t  = M_i}y_t/N_i^T}}}{T}}_{E_1} \\ &~~~ + \underbrace{\frac{\sum_{i \in [m] : \abs{p - M_i} \leq \epsilon}\Exp{}{ N_i^T\abs{M_i - \sum_{t=1}^T \indicator{p_t  = M_i}y_t/N_i^T}}}{T}}_{E_2}.
\end{align*}
We will show that  \[E_1 = O(\text{poly}(\log T)/T),\] 
and \[E_2 = \epsilon + O(1/T),\]
which will complete the argument. To this end, define $A^T$ as the number of times that the forecast is $\epsilon$-close to $p$, until time $T$:
\begin{equation}
    A^T := \sum_{i \in [m]: \abs{p - M_i} \leq \epsilon} N_i^T. \label{eq:def-At}
\end{equation} 
Lemma~\ref{lemma:A_t} shows that \texttt{PI-F99} satisfies $\Exp{}{A^T} = T - O(\text{poly}(\log T))$. This immediately leads to the bound for $E_1$; note that $\abs{M_i - \sum_{t=1}^T \indicator{p_t  = M_i}y_t/N_i^T} \leq 1$, and thus \[E_1 \leq \frac{\sum_{i \in [m] : \abs{p - M_i} > \epsilon}\Exp{}{ N_i^T}}{T}  = 1 - \frac{\Exp{}{A^T}}{T} = O(\text{poly}(\log T)/T).\]

\ifx false To bound $E_2$, we consider two cases separately: either (a) $p = r_j = l_{j+1}$ for some $j \in [m-1]$, that is, the bias of the Bernoulli is exactly at the common endpoint of two intervals; or (b) there is a unique index $j \in [m]$ such that $\abs{p - M_j} \leq \epsilon$. 

For case (b), without loss of generality, assume that $p > M_j$. Since $N_j^T \leq T$,  we get $E_2 \leq \mathbb{E}\abs{M_j - \sum_{t=1}^T \indicator{p_t  = M_j}y_t/N_j^T}$. \fi 

Bounding $E_2$ takes more work. The proof relies on the following `good' event occurring with high probability: 
\[G \equiv G_T: \text{for every $i \in [m]$, $\abs{p - \sum_{t=1}^T \indicator{p_t  = M_i}y_t/N_i^T} \leq \epsilon/2$.}\]
Due to the pre-initialization steps in \texttt{PI-F99}, it can be guaranteed that $\text{Pr}(G) = \text{Pr}(G_T) = 1 - O(1/T)$. For the details, we refer the reader to the proof of Lemma~\ref{lemma:A_t} (see case (a) in the proof), where we show a stronger version of this fact (namely, with $\epsilon/\log T_k$ instead of $\epsilon/2$), for \texttt{PI-F99}$(T_k)$ using a time-uniform concentration inequality (due to the time uniformity, the implication holds for \texttt{PI-F99} as well). \ifx false We break the expectation containing $E_2$ into two parts, using $\text{abs}(\cdot)$ instead of $\abs{\cdot}$ to avoid confusion with the conditioning operator: 
\begin{align*}
    T\cdot E_2 &\leq \sum_{i \in [m] : \abs{p - M_i} \leq \epsilon}\Exp{}{ N_i^T\text{abs}(M_i - \sum_{t=1}^T \indicator{p_t  = M_i}y_t/N_i^T) \mid G} + (1-\text{Pr}(G))\cdot T
    \\ &\leq \underbrace{\sum_{i \in [m] : \abs{p - M_i} \leq \epsilon}\Exp{}{ N_i^T\text{abs}(M_i - \sum_{t=1}^T \indicator{p_t  = M_i}y_t/N_i^T) \mid G}}_{E_3} + o(1). 
\end{align*}
Conditioned on the high probability event $G$, \fi 
We now do case work to bound $E_2$.
\begin{enumerate}[label=(\alph*),leftmargin=4pt]
    \item Suppose there exists an index $j \in [m]$ such that $\abs{p - M_j} \leq \epsilon/2$. This index must be unique since the $M_j$'s are $2\epsilon$ apart. Further, no $i \neq j$ can satisfy $\abs{p - M_i} \leq \epsilon$. We now obtain the following (below, we use $\text{abs}(\cdot)$ instead of $\abs{\cdot}$ to avoid confusion with the conditioning operator):
    \begin{align*}
         T\cdot E_2 &= \Exp{}{N_j^T\text{abs}(M_j - \sum_{t=1}^T \indicator{p_t  = M_j}y_t/N_j^T)}\\
         &\leq \Exp{}{  N_j^T\text{abs}(M_j - \sum_{t=1}^T \indicator{p_t  = M_j}y_t/N_j^T) \mid G} + (1-\text{Pr}(G))\cdot T
         \\ &= \Exp{}{ N_j^T\text{abs}(M_j - \sum_{t=1}^T \indicator{p_t  = M_j}y_t/N_j^T) \mid G} + O(1)
         \\ &\leq T\cdot \Exp{}{\text{abs}(M_j - \sum_{t=1}^T \indicator{p_t  = M_j}y_t/N_j^T) \mid G} + O(1)
        \\&\leq T\cdot\Exp{}{\abs{M_j - p} + \text{abs}(p - \sum_{t=1}^T \indicator{p_t  = M_i}y_t/N_j^T)\mid G} + O(1)
        \\&\leq T(\epsilon/2 + \epsilon/2) + O(1) = T\epsilon + O(1),
    \end{align*}
    where the inequality in the last line follows by the case assumption and the definition of $G$. Thus for this case, we have shown that $E_2 = \epsilon + O(1/T)$. 
\item Suppose $p$ is such that for every $i$, $\abs{p - M_i} > \epsilon/2$. To bound $E_2$, we are interested in the $M_i$'s for which $\abs{p - M_i} \leq \epsilon$. There can be at most two such $M_i$'s: an $M_j$ that satisfies $M_j \in (p+ \epsilon/2, p+\epsilon]$, and an $M_{l}$ that satisfies $M_j \in [p- \epsilon, p-\epsilon/2)$.%

Suppose there is an $M_j$ satisfying $M_j \in (p+ \epsilon/2, p+\epsilon]$. Set $R = \frac{1}{T}\sum_{t = 1}^T \indicator{p_t = M_j} (M_j - y_t)$ and note that $R \in [-1, 1]$. By Lemma~\ref{lemma:bdd-rv}, $\Exp{}{\abs{R}} \leq \Exp{}{R} + 2\cdot \text{Pr}(R < 0)$. Note that $\text{Pr}(R \geq 0) \geq \text{Pr}(G)$, since if $G$ holds, 
\begin{align*}
    R\cdot T &= \sum_{t=1}^T \indicator{p_t = M_j}(M_j - y_t) 
    \\&= \sum_{t=1}^T \indicator{p_t = M_j}((M_j - p) + (p - y_t))
    \\&\geq \sum_{t=1}^T \indicator{p_t = M_j}(\epsilon/2 + (p - y_t)) 
    \\&= N_i^T \epsilon/2 + \sum_{t=1}^T \indicator{p_t = M_j}(p - y_t)
    \\&\geq N_i^T \epsilon/2 - \abs{\sum_{t=1}^T \indicator{p_t = M_j}(p - y_t)}
    \\ &\geq N_i^T \epsilon/2 - N_i^T \epsilon/2 = 0,
\end{align*}
where the inequality in the last line is implied by $G$. Thus, $\text{Pr}(R < 0) \leq 1 - \text{Pr}(G) = O(1/T)$. Next, we bound $\Exp{}{R}$.
\begin{align*}
    \Exp{}{R} &= \frac{1}{T}\sum_{t = 1}^T \Exp{}{\indicator{p_t = M_j} (M_j - y_t)}
    \\ &= \frac{1}{T}\sum_{t = 1}^T \Exp{}{\indicator{p_t = M_j} (M_j - \Exp{}{y_t \mid (y_1, \ldots, y_{t-1}), (p_1, \ldots, p_{t})})}
    \\ &= \frac{1}{T}\sum_{t = 1}^T \Exp{}{\indicator{p_t = M_j} (M_j - p)}
    \\ &= \frac{\mathbb{E}N_j^T (M_j - p)}{T} \leq \frac{\mathbb{E}N_j^T}{T}\cdot \epsilon.
\end{align*}
Putting it together, we obtain $\Exp{}{\abs{R}} \leq \frac{\mathbb{E}N_j^T}{T}\cdot \epsilon  + O(1/T)$. 

Similarly, suppose there is an $M_{l}$ satisfying $M_l \in [p- \epsilon, p-\epsilon/2)$. For this $l$, define $S = \frac{1}{T}\sum_{t = 1}^T \indicator{p_t = M_l} (M_l - y_t)$. An identical argument as the one used for $R$ goes through; we use the inequality $\Exp{}{\abs{S}} \leq \Exp{}{-S} + 2\cdot \text{Pr}(S > 0)$ (from Lemma~\ref{lemma:bdd-rv}) and the relationship of $\text{Pr}(S > 0)$ to $G$ to obtain $\Exp{}{\abs{S}} \leq \frac{\mathbb{E}N_l^T}{T}\cdot \epsilon  + O(1/T)$. 

Finally, we conclude if both $M_k$ and $M_l$ with the given relationship to $p$ exist, then $E_2 \leq \Exp{}{\abs{R} + \abs{S}}$; if only $M_j$ exists, then $E_2 \leq \Exp{}{\abs{R}}$; if only $M_l$ exists, then $E_2 \leq \Exp{}{\abs{S}}$. In each case, \[E_2 \leq \frac{\Exp{}{A^T}\cdot \epsilon }{T} + O(1/T)\leq \epsilon + O(1/T). \]
\end{enumerate}
Since the two cases considered are exhaustive, this completes the proof. 

\ifx false 
On the other hand, let $M_j$ be such that that $p -\epsilon \leq M_j < p-\epsilon/2$  (there can be at most one such $M_j$). Set $S = \frac{1}{T}\sum_{t = 1}^T \indicator{p_t = M_j} (M_j - y_t)$ and note that $S \in [-1, 1]$. By Lemma~\ref{lemma:bdd-rv}, $\Exp{}{\abs{S}} \leq \Exp{}{-S} + 2\cdot \text{Pr}(S > 0)$. Now, 
\begin{align*}
    \Exp{}{-S} &= \frac{1}{T}\sum_{t = 1}^T \Exp{}{\indicator{p_t = M_j} (y_t - M_j)}
    \\ &= \frac{1}{T}\sum_{t = 1}^T \Exp{}{\indicator{p_t = M_j} (\Exp{}{y_t \mid (y_1, \ldots, y_{t-1}), (p_1, \ldots, p_{t})} - M_j)}
    \\ &= \frac{1}{T}\sum_{t = 1}^T \Exp{}{\indicator{p_t = M_j} (p - M_j)}
    \\ &= \frac{\mathbb{E}N_j^T (p - M_j)}{T}
    \\ &\leq \frac{\mathbb{E}N_j^T}{T}\cdot \epsilon.
\end{align*}

Now consider any $i$ such that $M_i < p-\epsilon/2$. Set $R_i = \frac{1}{T}\sum_{t = 1}^T \indicator{p_t = M_i} (M_i - y_t)$ and note that $R_i \in [-1, 1]$. By the above argument, $\Exp{}{\abs{R_i}} \leq \Exp{}{-R_i} + 2\cdot \text{Pr}(R_i > 0)$. 
The simpler 
 Then, the calibration error is upper bounded by  
\[
\frac{\Exp{}{A^T}\cdot \epsilon + \Exp{}{T - A^T}\cdot 1}{T} \leq \epsilon + \frac{T - \Exp{}{A^T}}{T}.
\]
\fi

\end{proof}

\begin{lemma}[for proving Proposition~\ref{prop:bernoulli-rate}]
\texttt{PI-F99} satisfies $\Exp{}{A^T} = T - O(\text{poly}(\log T))$, where $A^T$ is defined in the proof of Proposition~\ref{prop:bernoulli-rate}.
\label{lemma:A_t}
\end{lemma}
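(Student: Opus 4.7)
The plan is to analyze \texttt{PI-F99} epoch by epoch and then sum the per-epoch bounds, which form a geometric series. Fix an epoch $k$ during which \texttt{PI-F99}$(T_k)$ runs: each bin $M_i$ receives $K_k$ i.i.d.\ Bernoulli$(p)$ samples in the initialization phase, and every later update to any $p_i^t$ also comes from fresh i.i.d.\ Bernoulli$(p)$ draws (since nature does not adapt). The key object is the time-uniform concentration event
\[
G_k := \{\abs{p_i^t - p} \leq \delta_k \text{ for every } i \in [m] \text{ and every } t \in [T^{(k-1)} + mK_k,\, T^{(k)}]\}, \quad \delta_k := \epsilon/\log T_k.
\]
A time-uniform Chernoff--Hoeffding bound (e.g., via the stitched-boundary construction of Howard--Ramdas--McAuliffe--Sekhon) combined with a union bound over the $m$ bins yields $\Prob(G_k^c) = O(1/T_k^2)$; the precise choice of $K_k$ from equation \eqref{eq:K-k}, including its $\log\log(T_k/2)$ factor, is exactly what is needed for the time-uniform tail to reach this level at deviation $\delta_k$.

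Next, under $G_k$, I would show by a short case analysis on Foster's conditions A and B that every post-initialization forecast $M_i$ satisfies $\abs{M_i - p} \leq \epsilon + \delta_k$. If condition A triggers at $i$, then $p_i^t \in [l_i, r_i]$, so $\abs{p_i^t - M_i} \leq \epsilon$; combined with $G_k$, the triangle inequality gives $\abs{M_i - p} \leq \epsilon + \delta_k$. If condition B triggers at a pair $(j, j+1)$ with the algorithm playing $M_j$, then $p_j^t > r_j$ (which under $G_k$ forces $M_j < p - \epsilon + \delta_k$) and $p_{j+1}^t < l_{j+1} = r_j$ (which under $G_k$ forces $M_j > p - \epsilon - \delta_k$); the case of $M_{j+1}$ is symmetric.

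Finally, to bound $\Exp{}{T - A^T}$, I would decompose the bad forecasts in each epoch $k$ into (a) the initialization phase, contributing at most $(m-1)K_k$ bad forecasts (since at most one $M_{j^*}$ with $p \in I_{j^*}$ is good, so the other $(m-1)K_k$ initialization forecasts are not counted in $A^T$); (b) post-initialization steps on $G_k$; and (c) all steps on $G_k^c$, contributing $\leq T_k \cdot \Prob(G_k^c) = O(1/T_k)$ in expectation. For (b), the algorithm can only play $M_i$ with $\abs{M_i - p} \in (\epsilon, \epsilon + \delta_k]$ (``borderline bad''), and at most two gridpoints lie in this range for any fixed $p$. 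If $p$ lies on the grid, no bin is bad at all. Otherwise let $\eta_p > 0$ denote the distance from $p$ to the nearest grid boundary; once $\delta_k < \eta_p$ (i.e., for all $k$ beyond a $p$-dependent constant index) no bin is borderline bad and (b) vanishes. For the finitely many earlier epochs, a borderline-bad bin can be played only $O((\log T_k)/\eta_p^2)$ times, because triggering condition A or B at such a bin forces an empirical deviation of size $\geq \eta_p$ in some $p_{i'}^t$, and such deviations can hold only while $N_{i'}^t = O((\log T_k)/\eta_p^2)$ by the same time-uniform concentration argument used for $G_k$. Summing the resulting $O(\mathrm{poly}(\log T_k))$ per-epoch bound over the $O(\log T)$ epochs up to time $T$ yields $\Exp{}{T - A^T} = O(\mathrm{poly}(\log T))$, as desired. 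The main obstacle is establishing the time-uniform tail bound at the precise deviation $\delta_k$ with the $\log\log$ factor in $K_k$, together with the borderline-bad case in (b) when $p$ is close to but not on the grid; the rest is a direct unrolling of conditions A and B.
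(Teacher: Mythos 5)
Your proposal follows essentially the same route as the paper: a time-uniform concentration event $G_k$ at deviation $\epsilon/\log T_k$ (justified by the stitched bound of Howard et al.\ and the choice of $K_k$), a case analysis on conditions A and B showing that under $G_k$ every post-initialization forecast lies within $\epsilon+\delta_k$ of $p$, and a sum over the $O(\log T)$ epochs. Your unified A/B analysis neatly subsumes the paper's two cases (the paper splits on whether $p$ sits exactly on a grid boundary or in the interior of some $I_j$, with your $\eta_p$ playing the role of its $\delta$), and the accounting of the initialization phase and of $G_k^c$ matches.

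The one step I would push back on is your treatment of the finitely many early epochs with $\delta_k \geq \eta_p$. First, it is unnecessary: those epochs span at most $T^{(\widetilde{k})}$ time steps, a constant depending on $p$ but not on $T$, so the trivial bound (count every forecast there as bad) already gives $O(1)$; this is exactly what the paper does. Second, as stated the refinement has a gap. You argue that playing a borderline-bad bin $M_j$ under condition B forces an empirical deviation of size $\geq \eta_p$ in some $p_{i'}^t$, and that such a deviation can persist only while $N_{i'}^t = O((\log T_k)/\eta_p^2)$. But when $M_j < p-\epsilon$ is the bad bin, the forced deviation lives in the \emph{neighboring} bin's average $p_{j+1}^t$ (via $d_{j+1}^t>0$), and $N_{j+1}^t$ need not increase on rounds where the randomization lands on $M_j$; so bounding the number of samples for which $p_{j+1}^t$ deviates does not directly bound the number of times $M_j$ is forecast. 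Replace this sub-argument with the constant bound $T^{(\widetilde{k})}$ and the proof is complete.
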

\begin{proof}
We first show \texttt{PI-F99}$(T_k)$ satisfies $\Exp{}{A^{T}} = T - O(\text{poly}(\log T))$ for $T \leq T_k$ once $k$ is large enough. We do so via two cases.

\begin{enumerate}[label=(\alph*)]
    \item For the first case, suppose $p = r_j = l_{j+1}$ for some $j \in [m-1]$, that is, the bias of the Bernoulli is exactly at the common endpoint of two intervals. In other words, $A^T = N_j^T + N_{j+1}^T$. We show that with high probability, the forecaster will \emph{learn} this index $j$ in the initialization phase of each \texttt{PI-F99}$(T_k)$, and continue playing either $M_j$ or $M_{j+1}$ until he switches to \texttt{PI-F99}$(T_{k+1})$.

Consider the strategy \texttt{PI-F99}$(T_k)$ for some $k \geq 0$. From time $t = mK_k$ onwards, each $M_i$ has been forecasted at least $K_k$ times, so that the value of $p_i^t$ is \emph{close} to $p$. To formalize close, we will use a time-uniform sub-Gaussian concentration inequality shown by \citet[equation~(3.4)]{howard2021time}. We use their inequality, replacing each instance of $t$ with $K_k/4$, since a Bernoulli is $(1/4)$-sub-Gaussian and each $M_i$ has been forecasted at least $K_k$ times. Additionally, we replace $\alpha$ with $1/mT_k^2$. It can be verified that the final deviation term inside the brackets is at most $\epsilon/\log T_k$; in other words, %
with probability at least $1 - 1/T_k^2$, the following `good' event occurs:
\[
\text{for all times $mK_k \leq t \leq T_k$, $\max_{i \in [m]} \abs{p_i^t - p} \leq \epsilon/\log T_k \leq \epsilon$}.
\]
The radius of each interval is $\epsilon$. Thus if the above event occurs, it follows that for intervals $i < j$, the right-endpoint $r_i < p - \epsilon \leq p_i^t$, so we have an excess ($e_i^T > 0$) until $T_k$; and for intervals $i > j+1$, the left-endpoint $l_i > p + \epsilon \geq p_i^t$, so we have a deficit ($d_i^T > 0$) until $T_k$. For interval $j$, either both $d_j^t, e_j^t \leq 0$ or $e_j^t > 0$; for interval $j+1$, either both $d_j^t, e_j^t \leq 0$ or $d_j^t > 0$. Overall,  with probability at least $1 - 1/T_k^2$,  for times $mK_k < t \leq T_k$, Foster's algorithm randomizes between $M_j$ and $M_{j+1}$ (possibly playing one of them deterministically).

\item The other case is when $p$ belongs to the interior of some interval $I_j$, $j \in [m]$, or $p \in \{0, 1\}$. Then, there exists some $\delta > 0$ such that $\abs{p - M_{i}} \geq \epsilon + \delta$ for all $i \neq j$. For a sufficiently large value of $\widetilde{k} \in \mathbb{N}$, $\delta > \epsilon/\log (T_{\widetilde{k}})$. Consider the strategy \texttt{PI-F99}$(T_k)$ for $k \geq \widetilde{k}$. As noted in the previous case, our choice of $K_{k}$ ensures that with probability at least $1 - 1/T_{k}^2$, for all times $mK_{k}  \leq t \leq T_{k}$, $\max_{i \in [m]} \abs{p_i^t - p} \leq \epsilon/\log T_{k} < \delta$. Using triangle inequality, we conclude that $\abs{p_j^t - M_{j}} \leq \epsilon$ and $\abs{p_i^t - M_{i \neq j}} > \epsilon$. It follows that for every $i \neq j$, there is either a deficit or an excess, and for $j$ there is neither. Thus with probability at least $1 - 1/T_k^2$, Foster's algorithm plays $M_j$ after time $mK_{k}$.
\end{enumerate}

Cases (a) and (b) lead to a lower bound on $\Exp{}{A^T}$ for \texttt{PI-F99}$(T_k)$, the expected number of times an $M_i$ is forecasted that is $\epsilon$-close to $p$. Namely, we obtain that for $k \geq \widetilde{k}$, for the strategy \texttt{PI-F99}$(T_k)$ that plays assuming a horizon of $T_k$ from $t = 1$ itself, we have for $T \leq T_k$: 
\begin{align}
    \Exp{}{A^T} &\geq (1 - 1/T_k^2)(T - mK_k) \nonumber
    \\ &\geq T(1 - 1/T_k^2) - O(\text{poly}(\log T_k)) \nonumber
    \\ &\geq T - %
    O(\text{poly}(\log T)). \label{eq:full-run-T_k}
\end{align}
The final inequality above holds since $T \leq T_k$.%

We derive the implication for the overall strategy that is actually played, \texttt{PI-F99}. Recall the notation $T^{(0)}= 0$ and $T^{(k)} = T_0 + T_1 + \ldots + T_{k-1}$. In \texttt{PI-F99}, the \texttt{PI-F99}$(T_k)$ strategy is played from time $T^{(k-1)} + 1$ to time $T^{(k-1)} + T_k = T^{(k)}$. Let $T$ be such that $T \in [T^{(k)} + 1, T^{(k+1)} ]$ for any $k \geq \widetilde{k}$. Then by \eqref{eq:full-run-T_k}, %
\begin{equation*}
    \Exp{}{A^T - A^{T^{(k)}}} \geq T - T^{(k)} %
    - O(\text{poly}(\log T)).
\end{equation*}
Again by \eqref{eq:full-run-T_k}, the above holds with $T \leftarrow T^{(k)}$, $T^{(k)} \leftarrow T^{(k-1)}$, if $k \geq \widetilde{k} + 1$ ($\leftarrow$ corresponds to replacing the term on the left with the term on the right):
\begin{align*}
    \Exp{}{A^{T^{(k)}} - A^{T^{(k-1)}}} %
     &\geq T^{(k)} - {T^{(k-1)}} %
    - O(\text{poly}(\log T)).
\end{align*}
Instantiating this recursively for all $k \geq \widetilde{k} + 1$, and adding the inequalities together gives us: 
\begin{equation*}
    \Exp{}{A^T}  \geq T - T^{(\widetilde{k})} - \log(T)\cdot O(\text{poly}(\log T)) = T - O(\text{poly}(\log T)),
\end{equation*}
since $\widetilde{k}$ is some fixed constant (given $p$). This completes the argument.

\end{proof}

\begin{lemma}
\label{lemma:bdd-rv}
For any bounded random variable $R \in [-a, a]$,
\begin{equation}
    \Exp{}{\abs{R}} \leq \min(\Exp{}{R} + 2a\cdot\text{Pr}(R < 0), \Exp{}{-R} + 2a \cdot \text{Pr}(R > 0)). 
\end{equation}
\end{lemma}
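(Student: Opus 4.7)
The inequality is an immediate consequence of the identity $\abs{R} = R\mathbf{1}\{R>0\} - R\mathbf{1}\{R<0\}$, combined with the boundedness $\abs{R} \leq a$. The plan is to simply decompose the expectation of $\abs{R}$ according to the sign of $R$, compare it to the expectation of $R$ (respectively $-R$), and then bound the resulting ``negative part'' (respectively ``positive part'') by $a$ times the probability of the sign event.

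Concretely, I would first write
\[
\Exp{}{\abs{R}} = \Exp{}{R\mathbf{1}\{R>0\}} - \Exp{}{R\mathbf{1}\{R<0\}},
\qquad
\Exp{}{R} = \Exp{}{R\mathbf{1}\{R>0\}} + \Exp{}{R\mathbf{1}\{R<0\}},
\]
where the $\{R = 0\}$ event contributes zero to both decompositions. Subtracting gives $\Exp{}{\abs{R}} - \Exp{}{R} = -2\Exp{}{R\mathbf{1}\{R<0\}} = 2\Exp{}{(-R)\mathbf{1}\{R<0\}}$. Since $-R \leq a$ on the event $\{R<0\}$ (using $R \geq -a$), we obtain $\Exp{}{(-R)\mathbf{1}\{R<0\}} \leq a\cdot \text{Pr}(R<0)$, hence $\Exp{}{\abs{R}} \leq \Exp{}{R} + 2a\cdot\text{Pr}(R<0)$.

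For the second bound, I would apply the identical argument to $-R$, which also lies in $[-a,a]$ and satisfies $\text{Pr}(-R<0) = \text{Pr}(R>0)$. This yields $\Exp{}{\abs{R}} = \Exp{}{\abs{-R}} \leq \Exp{}{-R} + 2a\cdot\text{Pr}(R>0)$. Taking the minimum of the two bounds gives the claim. There is no real obstacle here: the entire argument is a one-line sign decomposition followed by a trivial application of $\abs{R}\leq a$, and the symmetry between the two bounds in the minimum is handled by simply swapping $R$ with $-R$.
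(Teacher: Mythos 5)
Your proposal is correct and follows essentially the same argument as the paper: the same sign decomposition $\abs{R} = R - 2R\,\indicator{R<0}$, the same bound $-R \leq a$ on the event $\{R<0\}$, and the same symmetry (replacing $R$ by $-R$) for the second term in the minimum. No meaningful difference from the paper's proof.
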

\begin{proof}
Note that,
\begin{align*}
    \Exp{}{\abs{R}} &= \Exp{}{R \cdot\indicator{R \geq 0} - R \cdot\indicator{R < 0}}
    \\ &= \Exp{}{R \cdot\indicator{R \geq 0} + R \cdot\indicator{R < 0} - 2R \cdot\indicator{R < 0}}
    \\ &= \Exp{}{R - 2R\cdot \indicator{R < 0}}
    \\ &\leq \Exp{}{R + 2a\cdot \indicator{R < 0}}
    \\ &= \Exp{}{R} + 2a\cdot\text{Pr}(R < 0).
\end{align*}
In the above proof, we can replace $R$ with $-R$ everywhere, since $-R \in [-a, a]$ as well. Thus we also obtain,
\[
\Exp{}{\abs{R}} = \Exp{}{\abs{-R}} \leq \Exp{}{-R} + 2a \cdot \text{Pr}(R > 0). 
\]
\end{proof}

\begin{proposition}
\label{ref:pi-f99-guarantee}
\texttt{PI-F99} achieves a calibration rate of $O(1/\sqrt{T})$ against any strategy of nature. 
\end{proposition}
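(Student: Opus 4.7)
Given horizon $T$, identify the unique $k \geq 0$ with $T^{(k)} < T \leq T^{(k+1)}$, so that $[1,T]$ decomposes into $k$ completed epochs of lengths $T_0, T_1, \ldots, T_{k-1}$ plus a partial epoch of length $T - T^{(k)} \leq T_k$, with a fresh copy of \texttt{PI-F99}$(T_j)$ driving epoch $j$. The plan is to reduce the overall $\epsilon$-calibration error to a weighted sum of per-epoch $\epsilon$-calibration errors and then invoke the per-epoch rate already established in the display culminating in \eqref{eq:pif99-k}.

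For each $i \in [m]$, splitting $\sum_{t=1}^T \indicator{p_t = M_i}(M_i - y_t)$ by epoch and applying the triangle inequality yields
\begin{equation*}
\text{CE}_T \leq \sum_{j=0}^{k-1} \frac{T_j}{T}\, \text{CE}^{(j)}_{T_j} + \frac{T - T^{(k)}}{T}\, \text{CE}^{(k,\mathrm{partial})},
\end{equation*}
where $\text{CE}^{(j)}$ denotes the calibration error computed only on epoch $j$'s observations. Since the coefficients sum to $1$ and $x \mapsto \max(x,0)$ is convex, subtracting $\epsilon$ on both sides and then applying $\max(\cdot, 0)$ yields the cleanly weighted bound
\begin{equation*}
\epsilon\text{-CE}_T \leq \sum_{j=0}^{k-1} \frac{T_j}{T}\, \epsilon\text{-CE}^{(j)}_{T_j} + \frac{T - T^{(k)}}{T}\, \epsilon\text{-CE}^{(k,\mathrm{partial})}.
\end{equation*}

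From the chain of inequalities culminating in \eqref{eq:pif99-k}, each epoch strategy \texttt{PI-F99}$(T_j)$ satisfies $\Exp{}{\epsilon\text{-CE}^{(j)}_{T'}} \leq (mK_j + C\sqrt{T_j})/T'$ for every $T' \leq T_j$. Taking expectations in the previous display, the epoch weight (either $T_j/T$ or $(T-T^{(k)})/T$) cancels the $T'$ in the denominator, leaving
\begin{equation*}
\Exp{}{\epsilon\text{-CE}_T} \leq \frac{1}{T}\sum_{j=0}^{k} \left(mK_j + C\sqrt{T_j}\right).
\end{equation*}
Since $T_j = 2^j T_0$, the geometric sum satisfies $\sum_{j=0}^{k}\sqrt{T_j} = O(\sqrt{T_k})$, and $\sum_{j=0}^{k} K_j = O(\log^2 T_k)$ is dominated by $\sqrt{T_k}$ once $T_0$ is taken sufficiently large. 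Combined with $T > T^{(k)} \geq T_k/2$, this gives $\Exp{}{\epsilon\text{-CE}_T} = O(\sqrt{T_k}/T) = O(1/\sqrt{T})$.

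\textbf{Anticipated obstacle.} The one non-mechanical ingredient is the decomposition step: the per-epoch $\epsilon$ slacks must collapse into a single overall $\epsilon$ slack rather than compound to $k\epsilon = \Theta(\epsilon\log T)$, which would swamp the $O(1/\sqrt{T})$ rate. Convexity of $\max(\cdot,0)$ applied to coefficients that sum to $1$ accomplishes this cleanly; the rest is a standard doubling-trick geometric-sum computation, together with the crude but sufficient observation $K_j = O(\log^2 T_j) = o(\sqrt{T_j})$.
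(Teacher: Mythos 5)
Your proof is correct and follows essentially the same route as the paper: split $[1,T]$ into doubling epochs, invoke the per-epoch bound from the chain ending in \eqref{eq:pif99-k}, make the per-epoch $\epsilon$ slacks collapse into a single $\epsilon$ (the paper does this by working with unnormalized errors via $\max(x-\epsilon,0)=\max(x,\epsilon)-\epsilon$; you do it via convexity of $\max(\cdot,0)$ over weights summing to one — the same fact in different clothing), and finish with the geometric sum $\sum_j \sqrt{T_j}=O(\sqrt{T})$. The only cosmetic difference is that you carry the $mK_j$ terms explicitly rather than absorbing them into $C\sqrt{T_j}$ up front.
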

\begin{proof}
Define $T^{(0)}= 0$ and $T^{(k)} = T_0 + T_1 + \ldots + T_{k-1} = (2^k - 1)T_0$, for $k \geq 1$. Further, define the cumulative (non-normalized) calibration error corresponding only to the times $t = t_1 + 1$ to $t_2$ as follows: 
\[
    \text{CE}(t_1, t_2) := \Exp{}{\max\roundbrack{\sum_{i = 1}^m \abs{\sum_{t = t_1 + 1}^{t_2} \indicator{p_t = M_i} (M_i - y_t)}, \epsilon(t_2 - t_1)}}. 
\]
From \eqref{eq:pif99-k}, \texttt{PI-F99} satisfies, for every $k\in \mathbb{N}$ and $T^{(k-1)} < t \leq T^{(k)}$, 
\begin{equation}
\text{CE}(T^{(k-1)}, t)  \leq \epsilon (t - T^{(k-1)}) + C\sqrt{T_{k-1}} \label{eq:fv-main-result}
\end{equation}
for some universal constant $C$ that does not depend on $k$. 

Now for a given $T > T^{(2)}$, let $k \geq 3$ be such that $T^{(k-1)} < T \leq T^{(k)}$. By triangle inequality and \eqref{eq:fv-main-result}, 
\begin{align*}
    \Exp{}{\max\roundbrack{\sum_{i = 1}^m \abs{\sum_{t = 1}^{T} \indicator{p_t = M_i} (M_i - y_t)}, \epsilon T}} &\leq \sum_{i=1}^{k-1} \text{CE}(T^{(i-1)}, T^{(i)}) + \text{CE}(T^{(k-1)}, T)
    \\ &\leq \sum_{i=1}^{k-1} (\epsilon(T^{(i)} - T^{(i-1)}) + C\sqrt{T_{i-1}}) \\ &\qquad\qquad\qquad+ \epsilon(T - T^{(k-1)}) + C\sqrt{T_{k-1}}
    \\ &= \epsilon T + \sum_{i=1}^{k} C\sqrt{T_{i-1}}
    \\ &= \epsilon T + C(\sqrt{T_0} + \sqrt{2T_0} + \sqrt{4T_0} + \ldots + \sqrt{2^{k-1}T_0})
    \\ &\leq \epsilon T + C\sqrt{T_0}\cdot \frac{\sqrt{2^{k}} - 1}{\sqrt{2} - 1}
    \\ &\leq \epsilon T + C\cdot \frac{\sqrt{2^kT_0}}{\sqrt{2} - 1}.
    \\ &\leq \epsilon T + C'\sqrt{T},%
\end{align*}
where $C' = C\cdot 2/(\sqrt{2}-1)$. The final inequality holds since for $k\geq 3$, \[\sqrt{2^k T_0} \leq \sqrt{4(2^{k-1}-1) T_0} =\sqrt{ 4 T^{(k-1)}} < 2\sqrt{T}.\] 
Dividing by $T$ and taking $\epsilon$ to the left-hand-size, we get that for all $T > T^{(2)}$, 
\[
\Exp{}{\max\roundbrack{\sum_{i = 1}^m \abs{\frac{1}{T}\sum_{t = 1}^{T} \indicator{p_t = M_i} (M_i - y_t)} - \epsilon, 0}} \leq C'\sqrt{T} = O(1/\sqrt{T}),
\]
as needed. 
\end{proof}

\section{Generalization of POTC-Cal to bounded outputs}
\label{subsec:potc-cal-bounded}
If the output is bounded instead of binary (see Remark~\ref{rem:potc-game}), then POTC-Cal can be modified as follows. The forecaster maintains $p_i^T$ as in the original algorithm, but these are now the mean of the $v_t$ values instead of $y_t$ values. The choice of the index $i$ and the final forecast $(p_{t0}, p_{t1})$ is made identically to the original POTC-Cal. Finally, the forecaster plays \begin{equation}
    \text{$p_t = p_{t0}$ if $v_t \leq r_i$, and $p_t = p_{t1}$ if $v_t > r_i$.} \label{eq:bounded-game-strategy} 
\end{equation}
Note that $r_i= l_{i+1}$ is the right (left) endpoint of interval $i$ (interval $i+1$), and thus a natural threshold for deciding which of the two intervals to play.

Lemmas~\ref{lemma:proof-cond-a} and \ref{lemma:proof-cond-b} hold for this modified setup and algorithm, and thus the $O(1/T)$ rate showed by Theorem~\ref{thm:potc-calibration-fast} also holds. Lemma~\ref{lemma:proof-cond-a} goes through since the set of equations~\eqref{eq:small-change-in-errors} can be modified as follows:
\begin{equation*}
    \abs{d_i^{t+1} - d_i^t} = 
    \abs{e_i^{t+1} - e_i^t} = \abs{\frac{v_{t+1} - p_i^t}{N_i^{t}+1}} \leq  \frac{1}{N_i^{t+1}}.
\end{equation*}

In the proof of Lemma~\ref{lemma:proof-cond-b}, we assumed without loss of generality that $y_{t+1} = 0$. This assumption can be modified to $v_{t+1} \leq r_i$ in keeping with the forecaster's updated strategy \eqref{eq:bounded-game-strategy}. The case $e_i^{t+1} < d_i^{t+1}$ goes through since it is a consequence of the set of equations \eqref{eq:small-change-in-errors}. For the case $e_i^{t+1} \geq d_i^{t+1}$, we have 
\begin{align*}
    N_i^{t+1} \max(d_i^{t+1}, e_i^{t+1}) &= N_i^t p_i^t + v_{t+1} - N_i^{t+1}r_i 
    \\ &= N_i^te_i^t + (v_{t+1}- r_i) \leq N_i^t\max(d_i^t, e_i^t),
\end{align*}
where the last inequality follows by the case assumption $v_{t+1} \leq r_i$. 

\end{document}